\newtheorem{thm}{Theorem}
\icmltitlerunning{Comparison and Unification of Three Regularization Methods in Batch Reinforcement Learning}
\begin{document}

\twocolumn[
\icmltitle{Comparison and Unification of Three Regularization Methods in Batch Reinforcement Learning}

% It is OKAY to include author information, even for blind
% submissions: the style file will automatically remove it for you
% unless you've provided the [accepted] option to the icml2021
% package.

% List of affiliations: The first argument should be a (short)
% identifier you will use later to specify author affiliations
% Academic affiliations should list Department, University, City, Region, Country
% Industry affiliations should list Company, City, Region, Country

% You can specify symbols, otherwise they are numbered in order.
% Ideally, you should not use this facility. Affiliations will be numbered
% in order of appearance and this is the preferred way.
\icmlsetsymbol{equal}{*}

\begin{icmlauthorlist}
\icmlauthor{Sarah Rathnam}{am}
\icmlauthor{Susan A. Murphy}{cs,stat}
\icmlauthor{Finale Doshi-Velez}{cs}
\end{icmlauthorlist}

\icmlaffiliation{am}{Department of Applied Mathematics, Harvard University}
\icmlaffiliation{cs}{Department of Computer Science, Harvard University}
\icmlaffiliation{stat}{Department of Statistics, Harvard University}

\icmlcorrespondingauthor{Sarah Rathnam}{sarah\_rathnam@g.harvard.edu}

% You may provide any keywords that you
% find helpful for describing your paper; these are used to populate
% the "keywords" metadata in the PDF but will not be shown in the document
\icmlkeywords{Machine Learning, ICML}

\vskip 0.3in
]

% this must go after the closing bracket ] following \twocolumn[ ...

% This command actually creates the footnote in the first column
% listing the affiliations and the copyright notice.
% The command takes one argument, which is text to display at the start of the footnote.
% The \icmlEqualContribution command is standard text for equal contribution.
% Remove it (just {}) if you do not need this facility.

%\printAffiliationsAndNotice{}  % leave blank if no need to mention equal contribution
\printAffiliationsAndNotice{} % otherwise use the standard text.

\begin{abstract}
%This document provides a basic paper template and submission guidelines.
%Abstracts must be a single paragraph, ideally between 4--6 sentences long.
%Gross violations will trigger corrections at the camera-ready phase.

In batch reinforcement learning, there can be poorly explored state-action pairs resulting in poorly learned, inaccurate models and poorly performing associated policies. Various regularization methods can mitigate the problem of learning overly-complex models in Markov decision processes (MDPs), however they operate in technically and intuitively distinct ways and lack a common form in which to compare them. This paper unifies three regularization methods in a common framework-- a weighted average transition matrix. Considering regularization methods in this common form illuminates how the MDP structure and the state-action pair distribution of the batch data set influence the relative performance of regularization methods. We confirm intuitions generated from the common framework by empirical evaluation across a range of MDPs and data collection policies. 
\end{abstract}

\section{Introduction}
\label{intro}
In certainty-equivalence reinforcement learning, the estimated model is treated as accurate when finding the optimal policy, without taking into account model uncertainty \cite{goodwin2014adaptive}. Consequently, when acting according to certainty-equivalence control, we risk finding a policy tailored to a model that is overly-expressive for the amount of data. %\sam{I think this is about the model overfitting to the data and thus the resulting CE policy is poor.  So this is a different issue from poorly explored $(s,a)$ pairs.  I always think of overfitting as due to poor match between the complexity/expressiveness of the model and the amount of data (too complex a model for the amount of data one has thus the regularization acts to reduce the complexity of the model. So it is not so much that we are using certainty-equivalence control but rather that the model used in CE control is too complex for the data we have. So the issue is CE control with a complex/expressive model.   This combination leads to overfitting. } 
This is especially problematic in a batch setting, as further exploration is not possible to improve the model. 

Many regularization methods address the problem of overfitting, for example reducing the planning horizon, using the posterior mean transition matrix under a Bayesian prior, or adding stochasticity to policies during planning. However, a challenge arises in understanding how they relate and choosing between them because regularization methods act on different elements of the MDP.  In the methods listed above, a reduced planning horizon modifies the discount factor, planning using the posterior mean transition matrix modifies the transition matrix, and planning over the set of stochastic policies modifies the set of policies over which we optimize.  Furthermore, their interpretations differ, for instance in the previously mentioned cases: decreasing the planning horizon, infusing outside information into the model, and planning over a stochastic set of policies.

Given certain constraints, the posterior mean transition matrix under a Bayesian prior is equivalent to a weighted average of the maximum likelihood estimator (MLE) transition matrix and the transition matrix implied by the prior.  Similarly, we express the other two regularization methods above as a weighted average between the MLE transition matrix and a regularization matrix of another form.  In this common Bayesian-like form, instead of comparing across disparate elements of the MDP, we can simply compare the form of the regularization matrix in each case and select the one that is most appropriate for the situation.  This framing suggests that a uniform Bayesian prior performs better in an MDP with densely-interconnected states, a lower discount factor performs better when balancing goals of different timescales, and planning over stochastic policies is preferable to avoid a catastrophic outcome. Simulations confirm that these hypotheses hold in many cases, but also underscore the need to take the data collection policy as well as the MDP into account when selecting a regularization method.

\section{Regularization in Certainty-Equivalence RL: Background and Related Work}

\paragraph{Bayesian Prior as Regularization}
A prior encodes expert knowledge, information from previous studies, or other outside information.  We can also view a prior on the transition function as a form of regularization since it forces the model not to overfit when data is limited \cite{poggio1990networks}.  In this paper, we consider planning using the posterior mean of the transition matrix under a Dirichlet prior as a regularized form of the transition matrix. 

\paragraph{Discount Regularization}
\citet{jiang} demonstrate that using a lower discount factor often leads to learning a policy that performs better than the one learned using the true discount factor. They prove that a lower discount factor restricts planning to a less complex set of policies, thereby avoiding overfitting.  They further demonstrate that the benefit of a lower discount factor is increasingly pronounced in cases where the model is estimated from a smaller data set. \citet{amit} refer to this concept as ``discount regularization,'' a term which we will use here. 

\paragraph{Planning over $\epsilon$-Greedy Policies}
\citet{arumugam} propose a regularization method where planning is conducted over the set of $\epsilon$-greedy policies rather than deterministic policies.  The added stochasticity prevents tailoring the policy too closely to the model. Like discount regularization, planning over $\epsilon$-greedy policies restricts the class of policies that can be optimal \cite{arumugam}.

\paragraph{Related Work: Other Regularization Methods}
Beyond the methods included in our unified framework, state aggregation maps the true MDP to a simpler, abstract representation. States are grouped by characteristics such as action-value function or optimal action \cite{state_abs}. Another method, $L_2$ regularization, introduces a complexity penalty, balancing a simpler model against one that fits the data more closely.  For example, \citet{amit} provide a framework to unify discount regularization with $L_2$ regularization in TD learning. Their use of $L_2$ regularization penalizes large value estimates, encouraging consistent value estimates across state-action pairs. In contrast, we frame methods as regularizing the transition matrix, thereby restricting model complexity.

\section{Notation and Definitions}
Methods in this paper are applied in a finite MDP setting. An MDP $M$ is characterized by $<S,A,R,T,\gamma>$, defined as follows. $S$: State space of size $N$. $A$: Action space. $R(s)$: Reward function. $R$ generally maps each state-action pair to a real-valued reward. In this paper, we consider rewards as a function of states only. $T(s'|s,a)$: Transition function, mapping each state-action pair to a probability distribution over successor states. $\gamma$: Discount factor, $0 \leq \gamma<1$. We assume $T$ and $R$ are unknown and estimated from the data. 

\section{Unification: Regularization as a Weighted Average Transition Matrix}
Each method above modifies a different element of the MDP.  To compare, we frame each as a weighted average of the MLE transition matrix and a matrix of another form. In this framework, we can compare by analyzing the matrix that is averaged with the MLE in each case. 

\paragraph{Dirichlet Prior}
We consider a Dirichlet distribution over the vector of successor state probabilities for a state-action pair, $T(s,a) = \langle p_1, ...,p_N \rangle$.  We assume prior $P_{prior}(T(s,a)) = \text{Dirichlet}(\langle \alpha_1,...,\alpha_N \rangle)$. The posterior mean can be expressed as a weighted average of $\hat{T}_{MLE}(s,a)$, the MLE of $T(s,a)$, and $T_{\text{prior mean}}(s,a)$, the transition matrix implied by the prior:
\begin{align*}\label{eq:1}
T_{\text{post mean}}(s,a)=(1-\epsilon) \hat{T}_{MLE}(s,a) + \epsilon T_{\text{prior mean}}(s,a)
\end{align*}
where $\epsilon=\frac{\sum \alpha_i}{\sum c_i + \sum \alpha_i}$ and $c_i$ is the transition count from state $s$ to state $i$ in the data set. %\sam{tell reader what $c_i$ is?}

The expression above is written for a single state $s$. To express the matrix $T_{\text{post mean}}(a)$ as a weighted average of the MLE and the prior transition matrix, $\epsilon$ must be equal for all states. %\sam{I think what we mean in prior sentence is that to express the matrix $T_{\text{post mean}}(a)$ as a weighted average of the MLE and prior, we need  $\epsilon$ must be equal for all states.} 
If we assume (1) $\sum c_i$ equal across all states for given action $a$ (uniform visits), and (2) $\sum \alpha_i$ equal across all states for given action $a$ (identical priors), then we can write the matrix of posterior means as
\begin{equation}
\hat{T}(a)=(1-\epsilon) \hat{T}_{MLE}(a) + \epsilon T_{\text{prior mean}}(a)
\label{eq:dirichlet}
\end{equation}
(Full derivation in Appendix~\ref{appx:dirchlet}.)  Condition (2) holds for the choice of a uniform prior in empirical examples, however condition (1), uniform visits, is restrictive and unrealistic. We consequently do not enforce uniform visits in examples, however the weighted average form still provides insight in comparing this regularization form to others.
% FDV: The below use of the term "relax" still confuses me.  You had the complete version first and the unrealistic/simplified version second above.  The sentence I was expecting here was something along the lines of "While unrealistic, this will give insight into how this regularization compares to others."

\paragraph{Discount Regularization}
To express discount regularization in the form of Equation \ref{eq:dirichlet}, consider the matrix form of the Bellman equation $V = R + \gamma T V$.  Let $\gamma_l < \gamma$ be the lower value of the discount factor used for regularization. We write $\gamma_l T$ from the Bellman equation under discount regularization as the product of $\gamma$, the true discount factor for the MDP, and a weighted average matrix:
\begin{align*}
\gamma_l T = \gamma[(1-\epsilon) T + \epsilon T_{zeros}]
\end{align*}
where $T_{zeros}$ is a matrix of zeros and $\epsilon=\frac{\gamma-\gamma_l}{\gamma}$.

Hence using a lower discount factor is equivalent to using $\gamma$, the true value of the discount factor for the MDP, and replacing the transition matrix with its weighted average with a matrix of zeros. Applying this to our unified framework, we replace the MLE transition matrix for action $a$ with the regularized form:
\begin{equation}
\hat{T}(a)= (1-\epsilon)\hat{T}_{MLE}(a)+\epsilon T_{zeros}
\label{eq:discount}
\end{equation}
(Full derivation in Appendix~\ref{appx:disc_reg}.)

\paragraph{Planning over $\epsilon$-Greedy Policies}
Finally, we frame planning over the set of $\epsilon$-greedy policies as a weighted average transition matrix.  When finding the optimal policy from the estimated MDP by policy iteration, all policies are treated as $\epsilon$-greedy. Then we perform the greedy, deterministic policy that had the best $\epsilon$-greedy performance.

When following an $\epsilon$-greedy policy, for greedy action $a$, the agent transitions according to transition matrix $T(a)$ with probability $(1-\epsilon)$ and chooses uniformly at random between the transition matrices for all actions with probability $\epsilon$.  Estimating each transition matrix by its MLE, the transitions under an $\epsilon$-greedy policy corresponds to: 
\begin{equation}
 \hat{T}(a) = (1-\epsilon)\hat{T}_{MLE}(a) + \epsilon \frac{1}{|\mathcal{A}|} \sum_{a'} \hat{T}_{MLE}(a')
 \label{eq:greedy}
\end{equation}
Recall that we restrict our consideration to the case of state-dependent rewards $R(s)$. Under this assumption, planning over the set of $\epsilon$-greedy policies is equivalent to replacing the MLE transition matrix for each action with Equation \ref{eq:greedy} before computing the optimal greedy policy.

\section{Discussion of Unified Framework}
With the methods expressed in a common form, we can now make predictions about their relative performance.

\paragraph{Uniform Prior and Discount Regularization Connection}
A surprising result revealed by the unified form is that, when constrained to the weighted average form (uniform exploration and equal priors), a uniform prior produces the same optimal policy as discount regularization for the same value of $\epsilon$. 

\begin{thm}\label{thm:uniform_discount_conn}
Let $M_1$ and $M_2$ be finite-state MDPs with identical state space, action space, and reward function.  Let $M_1$ have transition function $T$ and discount factor $(1-\epsilon) \gamma$. Let $M_2$ have discount factor $\gamma$ and transition function $(1-\epsilon)T + \epsilon T_{unif}$, where $T_{unif}$ is the uniform transition matrix. Then $M_1$ and $M_2$ have the same optimal policy.
\end{thm}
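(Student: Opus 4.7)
I plan to show that the two optimal value functions satisfy $V_2^*(s)=V_1^*(s)+\kappa$ for a single state-independent constant $\kappa$. Since a constant shift of the value function leaves every $\arg\max_a$ over actions unchanged, this immediately yields that the $M_1$ and $M_2$ Bellman-optimality argmax sets coincide at every state, so any optimal policy of one MDP is optimal for the other.

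I would begin by writing the two Bellman optimality equations side by side: $V_1^*(s)=\max_a\{R(s)+(1-\epsilon)\gamma\sum_{s'}T(s'\mid s,a)V_1^*(s')\}$, and $V_2^*(s)=\max_a\{R(s)+\gamma\sum_{s'}[(1-\epsilon)T(s'\mid s,a)+\epsilon/N]V_2^*(s')\}$. The key observation is that for any vector $V$, the quantity $(T_{\text{unif}}V)(s,a)=\tfrac{1}{N}\sum_{s'}V(s')$ is independent of both $s$ and $a$; write this average as $\bar V$. Pulling this $a$-independent term outside the max in $M_2$, the second equation simplifies to $V_2^*(s)=\max_a\{R(s)+(1-\epsilon)\gamma\sum_{s'}T(s'\mid s,a)V_2^*(s')\}+\epsilon\gamma\,\bar{V_2^*}$.

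Next, I would guess-and-verify the ansatz $V_2^*=V_1^*+\kappa$. Substituting into the rewritten $M_2$ equation, using that $(1-\epsilon)\gamma\sum_{s'}T(s'\mid s,a)\kappa=(1-\epsilon)\gamma\kappa$ (rows of $T$ sum to one) is $a$-independent, and applying the $M_1$ Bellman equation to collapse the bracketed max, yields the scalar consistency relation $\kappa(1-(1-\epsilon)\gamma)=\epsilon\gamma\,\bar{V_2^*}=\epsilon\gamma(\bar{V_1^*}+\kappa)$, which solves uniquely for $\kappa=\epsilon\gamma\,\bar{V_1^*}/(1-\gamma)$. With this $\kappa$ the ansatz is a fixed point of the $M_2$ Bellman optimality operator, and uniqueness of that fixed point (the operator is a $\gamma$-contraction) forces $V_2^*=V_1^*+\kappa$. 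Finally, $\pi_2^*(s)=\arg\max_a\sum_{s'}T(s'\mid s,a)V_2^*(s')$ once the $a$-independent terms $R(s)$ and $\epsilon\gamma\bar{V_2^*}$ and the positive scalar $(1-\epsilon)\gamma$ are dropped, and replacing $V_2^*$ by $V_1^*+\kappa$ removes one more constant, leaving exactly the $M_1$ argmax.

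I expect the main obstacle to be purely bookkeeping: the consistency equation for $\kappa$ is self-referential because $\bar{V_2^*}$ itself depends on $\kappa$, so one must close the small linear equation carefully; and one must verify that pushing the constant shift through the $\max$ and through the $(1-\epsilon)\gamma T$ recursion behaves as expected, which is immediate from $a$-independence of the shift and the row-stochasticity of $T$. Everything else is a direct application of Bellman-operator fixed-point uniqueness and the shift-invariance of argmax.
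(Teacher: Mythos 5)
Your proof is correct, and it rests on the same key observation as the paper's: the uniform-matrix contribution $\gamma\epsilon\frac{1}{N}\sum_{s'}V(s')$ is independent of both state and action, so it enters only as a constant shift that cannot change any argmax. The execution differs in a way worth noting. The paper works with $Q$-functions and a standalone lemma (adding a constant $x$ to every reward shifts $Q^\pi$ by $x/(1-\gamma)$ and hence preserves the optimal policy), then identifies the uniform-matrix term as such an $x$; the delicate point there is that $x$ is defined in terms of $Q^*$ itself, the solution of the very equation in which it appears. You instead posit $V_2^*=V_1^*+\kappa$, close the self-referential scalar equation explicitly --- your relation $\kappa(1-(1-\epsilon)\gamma)=\epsilon\gamma(\bar{V_1^*}+\kappa)$ does reduce to $\kappa(1-\gamma)=\epsilon\gamma\,\bar{V_1^*}$, hence $\kappa=\epsilon\gamma\,\bar{V_1^*}/(1-\gamma)$ --- and then invoke uniqueness of the fixed point of the $\gamma$-contraction. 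This buys you a fully explicit relation between the two optimal value functions and cleanly sidesteps the circularity; the paper's route buys a reusable reward-shift lemma stated independently of the regularization context. Your final argmax step (discarding $R(s)$, the constant $\epsilon\gamma\,\bar{V_2^*}$, the positive factor $(1-\epsilon)\gamma$, and the $\kappa$ passed through the row-stochastic $T$) is exactly what is needed, so no gaps remain.
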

\begin{proof}
    See Appendix~\ref{pf:unif_disc_equiv} for proof.
\end{proof}

\paragraph{Impact of MDP Structure}

\textit{A uniform Dirichlet prior is a good regularizer in a dense world.} With a uniform prior, the posterior transition matrix is not constrained by the connections between states in the true MDP. If the MDP has a high level of connectivity between states, the connectivity of a uniform prior is appropriate, however in the case of a sparsely connected MDP, assuming all states are linked is unlikely to be optimal.

\textit{Discount regularization balances between planning lengths.} The discount factor determines planning horizon, prioritizing shorter- versus longer-term rewards. The weighted average view of discount regularization is consistent with the view of discounting as causing the agent to act as if it transitions according to the true transition matrix with probability $1-\epsilon$ and exit the MDP (represented by the matrix of zeros) with probability $\epsilon$ \citep[p.~113]{suttonbarto}.  Faced with the prospect of exit, the agent prioritizes closer rewards.  We predict that this is beneficial when balancing the trade-offs of differently sized rewards at different distances.
%\sam{ our prediction about which type of MDPs would benefit from discount regularization is not explicitly stated.  Would it be something like MDP's with start state distributions which allow the agent to quickly obtain rewards do well with this regularization and in contrast MDPs with a start state distribution that implies that many steps are required before rewards are obtained will likely not do well with this type of regularization.  When I look at the simulation results they do not bear this out...} 

\textit{$\epsilon$-greedy planning avoids catastrophic outcomes.} In the $\epsilon$-greedy case, averaging the transition matrices of all actions causes the agent to act as if there is more stochasticity in the transitions. We hypothesize that the added randomness during planning will cause the agent to find a more conservative policy and perform better in MDPs with  catastrophic outcomes.

\paragraph{Impact of Data Collection Policy}
In the unified form for discount regularization, the regularization matrix is the same for all state-action pairs. In contrast, the regularization matrix for planning over $\epsilon$-greedy policies is the same for all actions, but differ by state. Finally, a Dirichlet prior, when not constrained to uniform visits, regularizes each state-action pair separately. Therefore, for data sets with uneven counts across states and/or actions, we expect a Dirichlet prior to perform best, followed by $\epsilon$-greedy planning then discount regularization because of the ability to separately tailor the regularization to the state-action pair.

Furthermore, examining the equivalence between discount regularization and the weighted average form of the uniform Dirichlet prior reveals that discount regularization functions like a Dirichlet prior with all parameters of magnitude $\frac{\gamma-\gamma_l}{\gamma_l}\frac{\sum c_i}{N}$ (see Appendix~\ref{appx:dirich_disc_prior} for derivation). This underscores the limitations of discount regularization under uneven data collection. The magnitude of the prior is higher for state-action pairs with more data, which is not desirable.  %there is not an appropriate parameter choice for all states since same $\gamma_l$ applied to all. \sam{not quite sure that I understand last phrase in prior sentence.  I think we mean that "more regularization for state-action pairs with more data" seems at odds with the understanding that one should regularize less with more data?} 
%SVR exactly-- that is the problem!

\section{Empirical Examples}
\begin{figure}[ht]
\centering
\includegraphics[width=.45\textwidth]{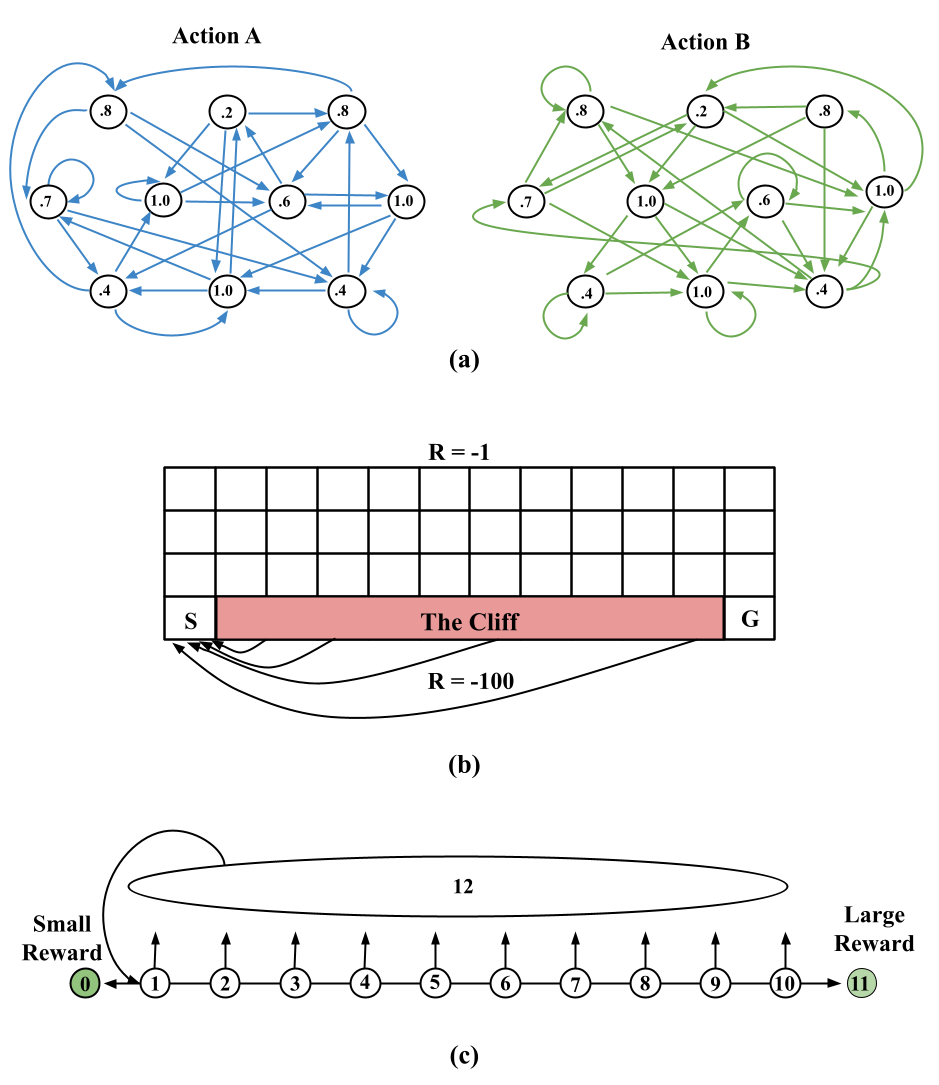}
\caption{(a) Interconnected Grid (b) Cliff Walk (c) Two Goals}
\label{fig:three_MDPs}
\end{figure}
Equipped with a common framework, we implement the three regularization methods across simple tabular examples. We explore the impact of following characteristics on the loss of the resulting policy: MDP structure, probability that an action in the data set is generated by the optimal policy, starting state of trajectories in the data set, and data set size.

\subsection{MDP Types} 

%\begin{figure*}[!htp]
\begin{figure*}[h!]
\centering
\includegraphics[width=\textwidth]{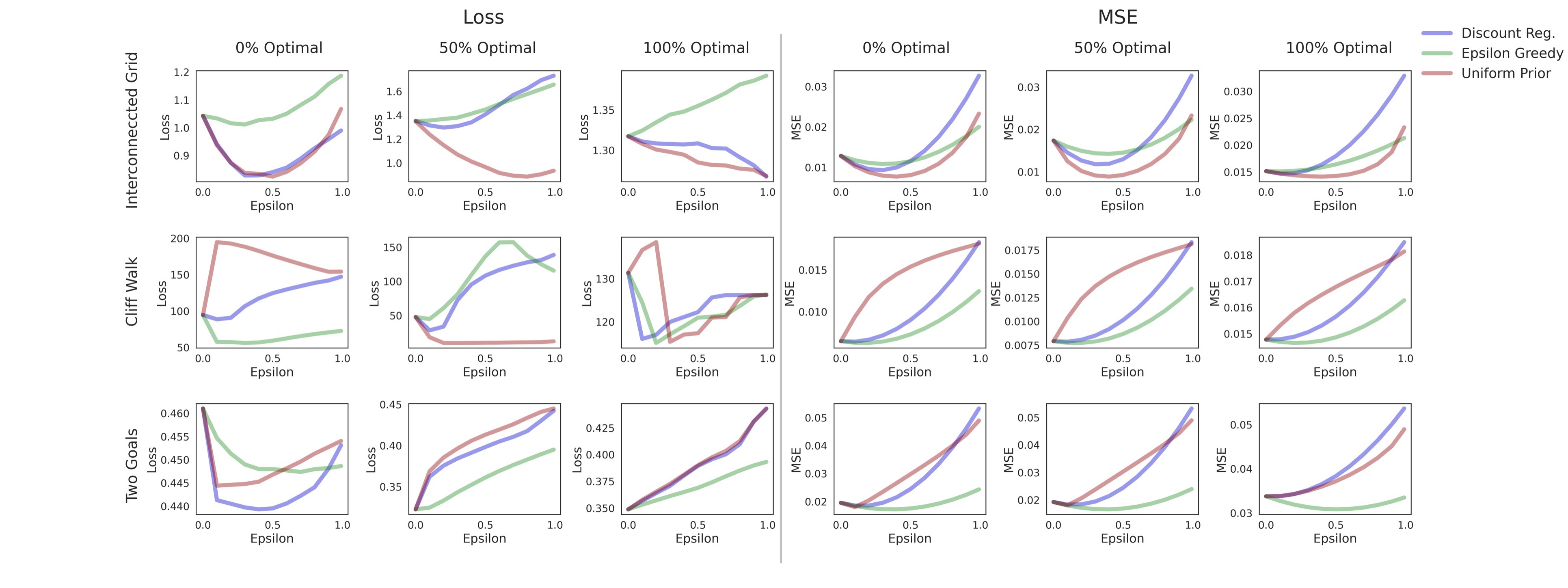}
\caption{Loss and MSE by Optimality of Data Collection Policy. Predictions by MDP hold for random data collection, but results vary when data partially- or fully-generated from optimal policy. Random start states; 15 trajectories of length 10 each for Interconnected Grid and Two Goals; 25 trajectories of length 20 for Cliff Walk. }
\label{fig:loss_mse}
\end{figure*}

\paragraph{Dense world: Interconnected Grid} This example MDP illustrates dense and complex connections between states.  For each action, the agent transitions according to the arrows in Figure \ref{fig:three_MDPs}(a) with equal probability. Rewards are normally distributed with means as indicated.

\paragraph{Catastrophic Outcome: Cliff Walk} The Cliff Walk example from \citet{suttonbarto} represents an agent that moves left, right, up, and down, with added noise. Mean rewards are -100 in the cliff and -1 for all other transitions. After reaching state $\textbf{G}$, the agent receives no further rewards.

\paragraph{Different Planning Lengths: Two Goals} The final example MDP depicts differently sized rewards on opposite ends of a linear grid. The agent moves left, right, or up, each with noise. Transitioning to state 0 results in a reward of mean 0.10 and transitioning to state 11 results in a reward of mean 1. Rewards for all other transitions have mean 0 and after reaching either the small or large reward, the agent receives no further rewards.

\subsection{Implementation}
For each MDP, for a range of data collection policies and sizes, we generate trajectories from the true MDP. We estimate the transition and reward matrices as the MLE. For a range of $\epsilon$ between 0 and 1 (or prior magnitude from 0 to 1000), we regularize the transition matrix for each action. We find the optimal policy via policy iteration. The policy that is optimal in the estimated, regularized MDP and the optimal policy for the true MDP are compared in terms of performance on the true MDP. Details are in Appendix~\ref{appx:imp_deets}.

\subsection{Results}
To compare regularization methods across MDPs and data sets, we plot the loss of the resulting policy. We also plot mean squared error (MSE) of the estimated, regularized transition matrix compared to the true transition matrix to investigate the extent to which better approximating the true transition matrix drives lower loss. 

\paragraph{Hypothesized interactions between MDP structure and regularization are confirmed, although mediated by data collection policy.}
% FDV: Summarize connections.
%For example, in Figure \ref{fig:loss_mse}, we note a reversal in the best regularization method for the Cliff Walk when data is not fully random, and Two Goals no regularization method is beneficial as actions are drawn increasingly from the optimal policy.
In Figure~\ref{fig:loss_mse}, comparing results by MDP confirms our predictions under the condition of uniformly random data collection, although less pronounced in the case of Interconnected Grid. Deviating from a uniformly random data collection policy by generating an increasing percentage of the actions from the true optimal policy impacts which method minimizes loss as well as whether regularization is beneficial at all. We note a reversal in which regularization method minimizes loss for the Cliff Walk when data is not fully random, and with Two Goals, no regularization method is beneficial as actions in the data set are generated increasingly from the optimal policy. 
%\paragraph{What's the Take-away?  Prior, you hypothesize that more flexible regularizers will be better for less even exploration, but that doesn't seem quite true in the plots?} $ $ \newline
% FDV: Expand the above in the context of uniform/ not uniform visits 

% FDV: Can the two below be combined into "In contrast, data set size and starting state do not have as large effect (results in Appendix ref and ref). And connect to whether those affect the even-ness of the exploration or the need to dig deeper to understand more?)
\paragraph{Data set size and starting state demonstrate less impact on relative loss.} Number and length of trajectories do not considerably impact relative performance. The impact of trajectory starting state varies by MDP. While the ordering of methods by loss is not dramatically shifted across starting states, the shape of the loss curves indicate a differing impact of regularization. Results are in Appendices~\ref{results_data_size} and \ref{results_start}.

\paragraph{Impact of uneven data collection is inconclusive.} We hypothesized that regularizers that are more flexible in allowing different amounts of regularization across state-action pairs outperform under uneven exploration. Both generating the data set from the optimal policy and restricting starting state cause uneven exploration, yet we do not clearly observe the hypothesized relationship. Further investigation is needed to isolate the impact of uneven exploration. 

\paragraph{Lower loss does not consistently correspond to lower transition matrix MSE.} Although loss is partially driven by the ability to accurately replicate the true transition matrix, there are other factors impacting loss to be identified.

\vspace{.5cm}
To summarize, choosing between regularization methods can be viewed in terms of choosing the regularization matrix from the weighted average form that best aligns with the context at hand in terms of both the data collection policy and MDP structure. The empirical examples in this section provide evidence that both of these factors impact relative loss. Further work remains to formalize the conditions in which each regularizer is preferred.

\section{Conclusion}
We have unified three MDP regularization methods into a common framework that helps us to predict and understand their performance in different settings. The common form and empirical examples demonstrate that it is vital to consider both the the MDP structure and the data collection policy when deciding between regularization methods. The unified form also motivates viewing discount regularization as replacing the maximum likelihood estimate transition matrix with its posterior mean under a uniform prior, when the data set is constrained to uniform state visitation. In future work, we will leverage the unified framework to systematically characterize MDPs and data sets to select a regularization method.

\vspace{.5cm}
\textbf{Acknowledgements}

Research reported in this work was supported by the National Institute Of Biomedical Imaging And Bioengineering and the Office of the Director of the National Institutes of Health under award number P41EB028242, the National Institute on Alcohol Abuse and Alcoholism under award number  R01AA023187. 

SR and FDV acknowledge support from NSF project 2007076.  Research reported in this work was also supported by the National Institute Of Biomedical Imaging And Bioengineering and the Office of the Director of the National Institutes of Health under award number P41EB028242, the National Institute on Alcohol Abuse and Alcoholism under award number  R01AA023187. 

The content is solely the responsibility of the authors and does not necessarily represent the official views of the National Institutes of Health.

\bibliography{main}
\bibliographystyle{icml2021}

\onecolumn
\appendix
\section*{APPENDIX}

\section{Full Derivation of Unified Form}

\subsection{Dirichlet Prior}\label{appx:dirchlet}

Assume prior $P_{prior}(T(s,a)) = \text{Dirichlet} (\langle \alpha_1,...,\alpha_N \rangle )$ on transition matrix $T(s,a)$ and let $\langle c_1,...,c_N \rangle$ be the transition count data observed from state $s$ to states 1 to $N$ under action $a$. The posterior of $T(s,a)$ follows a Dirichlet distribution with parameter $\langle c_1 + \alpha_1,...,c_N + \alpha_N \rangle$ and the posterior mean is:
\begin{align*}
T_{\text{post mean}}(s,a)= \langle \frac{c_1+\alpha_1}{\sum c_i + \sum \alpha_i},...,\frac{c_N+\alpha_N}{\sum c_i + \sum \alpha_i} \rangle
\end{align*}
\begin{align*}
& T_{\text{post mean}}(s,a) = \langle \frac{c_1}{\sum c_i + \sum \alpha_i},...,\frac{c_N}{\sum c_i + \sum \alpha_i} \rangle + \langle \frac{\alpha_1}{\sum c_i + \sum \alpha_i},...,\frac{\alpha_N}{\sum c_i + \sum \alpha_i} \rangle
\end{align*}
Multiply each term by 1.
\begin{align*}
T_{\text{post mean}}(s,a) &= \frac{\sum c_i}{\sum c_i} \langle \frac{c_1}{\sum c_i + \sum \alpha_i},...,\frac{c_N}{\sum c_i + \sum \alpha_i} \rangle +\frac{\sum \alpha_i}{\sum \alpha_i} \langle \frac{\alpha_1}{\sum c_i + \sum \alpha_i},...,\frac{\alpha_N}{\sum c_i + \sum \alpha_i} \rangle \\
&= \frac{\sum c_i}{\sum c_i + \sum \alpha_i} \langle \frac{c_1}{\sum c_i},...,\frac{c_N}{\sum c_i} \rangle +\frac{\sum \alpha_i}{\sum c_i + \sum \alpha_i} \langle \frac{\alpha_1}{\sum \alpha_i},...,\frac{\alpha_N}{\sum \alpha_i} \rangle
\end{align*}

Let $\hat{T}_{MLE}(s,a)$ be the MLE of $T(s,a)$: $\hat{T}_{MLE}(s,a)= \langle \frac{c_1}{\sum c_i},...,\frac{c_N}{\sum c_i} \rangle$.

Let $T_{\text{prior mean}}(s,a)$ be the transition matrix implied by the prior for state $s$ and action $a$. $T_{\text{prior mean}}(s,a)= \langle \frac{\alpha_1}{\sum \alpha_i},...,\frac{\alpha_N}{\sum \alpha_i} \rangle$.

Using $\hat{T}_{MLE}(s,a)$ and $T_{\text{prior mean}}(s,a)$, we can write $T_{\text{post mean}}(s,a)$ as follows.
\begin{align*}
T_{\text{post mean}}&(s,a) = \frac{\sum c_i}{\sum c_i + \sum \alpha_i} \hat{T}_{MLE}(s,a) + \frac{\sum \alpha_i}{\sum c_i + \sum \alpha_i }T_{\text{prior mean}}(s,a)
\end{align*}
Let $\epsilon=\frac{\sum \alpha_i}{\sum c_i + \sum \alpha_i}$. Consequently, we have:
\begin{align*}\label{eq:1}
T_{\text{post mean}}(s,a)=(1-\epsilon) \hat{T}_{MLE}(s,a) + \epsilon T_{\text{prior mean}}(s,a)
\end{align*}
% reiterate all at end

The expression above is for a single state $s$. To write $T_{\text{post mean}}(a)$ as a matrix for all states for a given action, we must pull out the same factor of $\epsilon$ and $(1-\epsilon)$ for all states.  Hence we assume:
\setlist{nolistsep}\begin{enumerate}[noitemsep]
    \item $\sum c_i$ equal across all states for a given action $a$, and 
    \item $\sum \alpha_i$ equal across all states for a given action $a$,
\end{enumerate}

Assuming the conditions above and taking the matrix of posterior means as our estimate of $T(a)$:
\begin{equation*}
\hat{T}(a)=(1-\epsilon) \hat{T}_{MLE}(a) + \epsilon T_{\text{prior mean}}(a)
\end{equation*}

\subsection{Discount Regularization}\label{appx:disc_reg}

Consider the matrix form of the Bellman equation, using $\gamma_l < \gamma$, the lower value of the discount factor used for regularization: $V = R + \gamma_l T V$.  By the steps below, we write the product $\gamma_l T$ from the Bellman equation as the product of true discount factor $\gamma$ and a weighted average matrix.

First add and subtract $\gamma$.
\begin{align*}
\gamma_l T=[\gamma - (\gamma - \gamma_l)]T
\end{align*}
Pull out a factor of $\gamma$.
\begin{align*}
    \gamma_l T= \gamma(1 - \frac{(\gamma - \gamma_l)}{\gamma})T
\end{align*}
Let $T_{zeros}$ be an appropriately sized matrix of zeros. Adding $\gamma T_{zeros}$ to the right hand side does not change the equality.
\begin{align*}
\gamma_l T = \gamma[(1-\frac{\gamma-\gamma_l}{\gamma})T+T_{zeros}]
\end{align*}
Multiply the $T_{zeros}$ term inside the parentheses by $\frac{\gamma - \gamma_l}{\gamma}$. $T_{zeros}$ is all zeros so a multiplier does not affect the equality.
\begin{align*}
\gamma_l T = \gamma[(1-\frac{\gamma-\gamma_l}{\gamma})T+(\frac{\gamma-\gamma_l}{\gamma})T_{zeros}]
\end{align*}
Let $\epsilon=\frac{\gamma-\gamma_l}{\gamma}$, 
\begin{align*}
\gamma_l T = \gamma[(1-\epsilon) T_{true} + \epsilon T_{zeros}]
\end{align*}
We have replaced the product of the regularization discount factor and the true transition matrix with the product of the true discount factor and a weighted average of the transition matrix and a matrix of zeros.  To put this in the unified framework, consider regularizing the MLE transition matrix for action $a$ via discount regularization. Using the proof in this section, our regularized estimated transition matrix for action $a$, $\hat{T}(a)$, is:
\begin{equation*}
\hat{T}(a)= (1-\epsilon)\hat{T}_{MLE}(a)+\epsilon T_{zeros}
\label{eq:discount}
\end{equation*}

\subsection{Discount Regularization - Uniform Prior Connection}

\subsubsection{Proof of Theorem 1: Equivalence of Weighted Average Form}\label{pf:unif_disc_equiv}

\paragraph{First we show that the optimal policy is not affected by adding the same constant $x$ to all rewards $r(s,a)$.}  Let $Q^{\pi}_x(s,a)$ be the action-value function for policy $\pi$ when adding constant $x$ to all rewards. Then, 

$Q^{\pi}_x(s,a)= \mathbb{E}_{\pi} [\sum_{k \geq 0}^{\infty} \gamma^k (r(s_k,a_k) + x) | s_0 = s, a_o = a] = \mathbb{E}_{\pi} [\sum_{k \geq 0}^{\infty} \gamma^k r(s_k,a_k) | s_0 = s, a_o = a] + \frac{x}{1-\gamma}$

and the action-value function of the optimal policy is, 
$Q_x^*(s,a)= \text{max}_{\pi}[ \mathbb{E}_{\pi} [\sum_{k \geq 0}^{\infty} \gamma^k r(s_k,a_k) | s_0 = s, a_o = a] + \frac{x}{1-\gamma}]$

The optimal action at state $s$ is $\pi_{opt}(s) = \text{argmax}_{a} Q^*_x(s,a)$. The first term of the expression for $Q^*_x(s,a)$ does not contain $x$ and the second does not depend on $a$, therefore $\pi_{opt}$ is not affected by the choice of any constant added to $r(s,a)$.

\paragraph{Next, observe that $Q_x^*(s,a)$ is the solution to Bellman's optimality equation,} 
$Q_x(s,a) = r(s,a) + x + \gamma \sum_{s'} T(s,a,s') \text{max}_{a'} Q_x(s',a')$. From above, we established that $\pi_{opt} = \text{argmax}_{a} Q^*_x(s,a)$ does not depend on $x$. Therefore the solution to Bellman's optimality equation also does not depend on $x$.

\paragraph{Bellman's optimality equation for a transition matrix regularized by averaging with the uniform transition matrix can be written in terms of a scaled discount factor and added constant.} In this case, Bellman's optimality equation is  $Q^*(s,a) = r(s,a) + \gamma \sum_{s'}[ ( (1 - \epsilon) T(s,a,s') + \epsilon \frac{1}{n}) \text{max}_{a'} Q^*(s',a')]$ \\
$Q^*(s,a) = r(s,a) + \gamma (1 - \epsilon) \sum_{s'} T(s,a,s') \text{max}_{a'} Q^*(s',a') + \gamma \frac{\epsilon}{n} \sum_{s'} \text{max}_{a'} Q^*(s',a')$ \\
Letting $x=\gamma \frac{\epsilon}{n} \sum_{s'} \text{max}_{a'} Q^*(s',a')$, Bellman's optimality equation is:\\ 
$Q^*(s,a) = r(s,a) + x + \gamma (1 - \epsilon) \sum_{s'} T(s,a,s') \text{max}_{a'} Q^*(s',a')$

\paragraph{$x$ is constant with respect to $a$, so by this first section of the proof, it does not affect the optimal policy.}  Therefore we can write the expression for the optimal policy at state $s$ as: \\
$\pi_{opt}(s) = \text{argmax}_a  Q^*(s,a)$ \\
$\pi_{opt}(s)=\text{argmax}_a ( r(s,a) + x + \gamma (1 - \epsilon) \sum_{s'} T(s,a,s') \text{max}_{a'} Q^*(s',a'))$ \\
$\pi_{opt}(s)=\text{argmax}_a ( r(s,a) + \gamma (1 - \epsilon) \sum_{s'} T(s,a,s') \text{max}_{a'} Q^*(s',a'))$

This is the optimal policy for the MDP with the original transition matrix and discount factor $(1- \epsilon ) \gamma$. This is equivalent to discount regularization, and matches the value of epsilon $\epsilon = \frac{\gamma - \gamma_l}{\gamma}$ that we derived in the previous section.

\subsubsection{Dirichlet Prior Implied by Discount Regularization}\label{appx:dirich_disc_prior}
The equivalence proof above demonstrates that, for a given value of $\epsilon$, averaging the transition matrix with the uniform matrix or with the matrix of zeros yields the same policy.  Averaging with the uniform matrix is only exactly equivalent to a uniform prior if the sum of the transition counts is equal for all starting states, for a given action. Recall that $\epsilon = \frac{\sum \alpha_i}{\sum \alpha_i + \sum c_i}$, where $c_i$ are the transition counts from the data and $\alpha_i$ are the parameters of the Dirichlet prior. We can solve to find the prior magnitude $\alpha_i$ implied by the choice of $\epsilon$ and observed transition counts $\sum c_i$ in the weighted average form.  This reveals what Dirichlet prior we are implicitly using when we regularize by the weighted average uniform form, and consequently the Dirichlet prior implied by discount regularization.

From $\epsilon=\frac{\sum \alpha_i}{\sum \alpha_i + \sum c_i}$, observe $\sum \alpha_i = \frac{\epsilon}{1-\epsilon}\sum c_i$.  We assume $N$ states. For the uniform distribution, all $\alpha_i$ for a given state are the same, so substitute $\sum \alpha_i = N \alpha_i$ to get $\alpha_i =\frac{\epsilon}{1-\epsilon}\frac{\sum c_i}{N}$.  Therefore, using a lower discount rate yields the same optimal policy as setting a uniform Dirichlet prior over each row of the transition matrix with magnitude $\frac{\epsilon}{1-\epsilon}\frac{\sum c_i}{N}$. 

We can relate this back to the value of $\gamma$.  Recall $\epsilon = \frac{\gamma - \gamma_l}{\gamma}$, where $\gamma$ is the true value of the discount factor and $\gamma_l$ is the lower value used for regularization. Plugging this into the expression above yields  $\alpha_i = \frac{\gamma-\gamma_l}{\gamma_l}\frac{\sum c_i}{N}$. So discount regularization functions like a Dirichlet prior 
\begin{equation}
T_{prior}(s,a) \sim \text{Dirichlet}(\frac{\gamma-\gamma_l}{\gamma_l}\frac{\sum c_i}{N},...,\frac{\gamma-\gamma_l}{\gamma_l}\frac{\sum c_i}{N})
\label{eqn:disc_reg}
\end{equation}
where again $\sum c_i$ is the total number of transitions in the data starting at state $s$.

\section{Implementation Details}\label{appx:imp_deets}

\begin{algorithm}[H]
   \caption{Regularization Loss Pseudocode}
   \label{alg:example}
\begin{algorithmic}
   \STATE {\bfseries Input:} \text{MDP, epsilon list, regularization method}
   %\STATE \text{    number of trajectories, trajectory length}
   \FOR{$i=1$ {\bfseries to} 5000}
   \STATE Generate data set: $n$ trajectories of length $l$
   \STATE Estimate MDP from data
    \FOR{$\epsilon$ in epsilon list}
    \STATE Regularize transition matrices by amount $\epsilon$
    \STATE Calculate optimal policy $\pi$ of regularized MDP
    \STATE Calculate loss comparing $\pi$ vs. true optimal policy in true MDP
    \ENDFOR
   \ENDFOR
    \STATE Average loss by $\epsilon$ value across all data sets
\end{algorithmic}
\end{algorithm}

To compare policies resulting from different regularization methods, we implement the following procedure, summarized in Algorithm \ref{alg:example}. Separately for each of the three example MDPs, we repeatedly generate data sets of trajectories from the true MDP. We estimate the transition and reward matrices as the MLE of the data. The estimate of the reward function at state-action pair $(s,a)$ is then the mean of the observed rewards at $(s,a)$.  The estimated probability of transition from state $s$ to state $s'$ given action $a$ is the number of times the transition from $s$ to $s'$ is observed given action $a$ divided by the number of times state-action pair $(s,a)$ is observed in the data set. For state-action pairs that are not observed in the data, we assume equal transition probabilities to all states and reward of 0.50.

For each of a list of values of $\epsilon$ between 0 and 1 (or for uniform prior, a list of multipliers between 0 and 1000), we regularize the estimated transition matrix for each action. We then find the optimal policy via policy iteration.  Separately, we calculate the true optimal policy using the known, true MDP.  We then evaluate the policy found from the estimated, regularized MDP and the policy from the true MDP, both in the true MDP. We compute loss as the weighted average difference in values of the two policies across all states, weighted by the starting state distribution.

To explore the impact on different aspects of the data set, we vary the trajectory starting state, the length and number of trajectories, and the probability of an action being generated from the optimal policy versus a random policy. For the Cliff Walk, trajectory starting states considered are uniformly random, start at $S$, or start within 2 states of $G$. For the Interconnected Grid, starting states are either uniformly random, limited to 5 of the 10 states, or limited to 1 state.  In the case of Two Goals, starting states are uniformly random, starting in state 1 (next to the small reward) or starting in state 10 (next to the large reward).

The MSE is the squared difference in transition probabilities between the true and estimated transition matrices, averaged across all state-action pairs. For discount regularization, the weighted average form is not a true transition matrix. There is an implicit absorbing state that the agent enters with probability $\epsilon$ at each step. For discount regularization, we calculate the MSE in relation to the augmented transition matrix with the absorbing state and also without it.

\section{Additional Results}\label{appx:addl_results}

In the case of discount regularization, the regularized form is not a true transition matrix, so we also plot its MSE taking into account the implicit absorbing state.  We display plots of the MSE without the absorbing state as well because the scale allows for viewing the differences in detail.

\subsection{Results by Distance from Optimal}
% this one is in the paper body. Probably remove it here.
\begin{figure}[!htp]
\centering
\includegraphics[width=\textwidth]{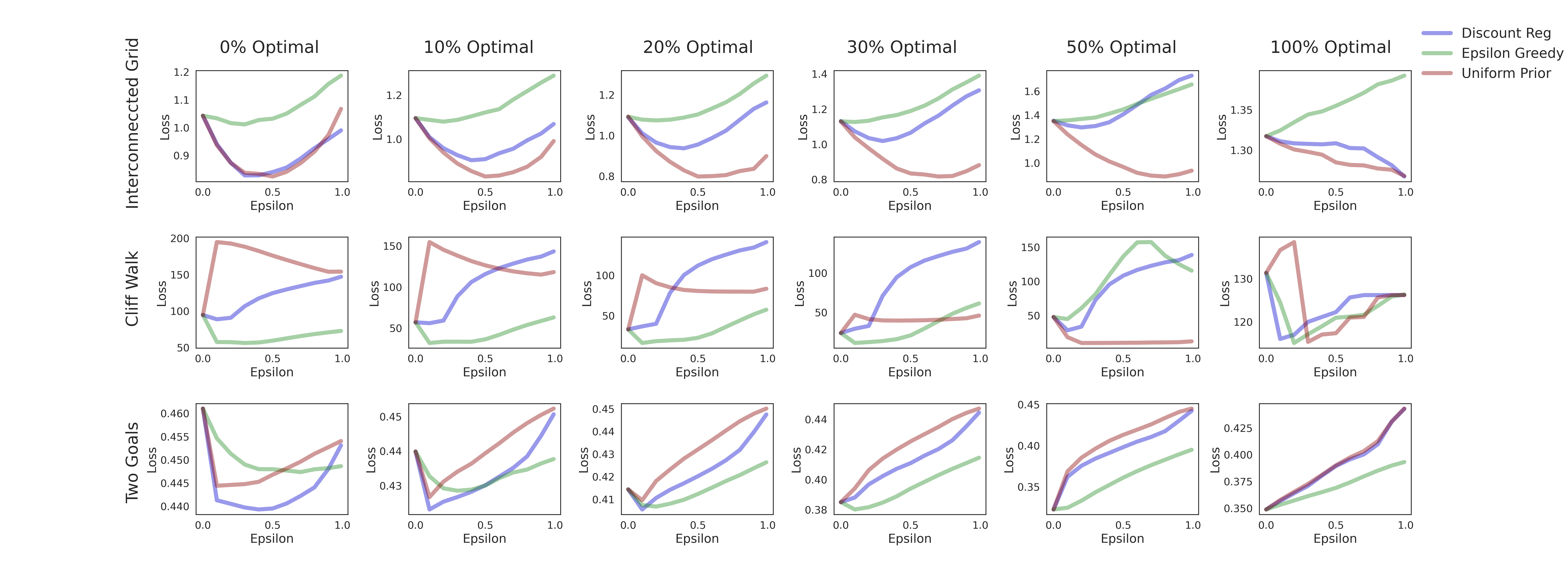}
\caption{\textbf{Loss, varying probability that actions in the data set are drawn from optimal policy.} Random start states; 15 trajectories of length 10 each for Interconnected Grid and Two Goals; 25 trajectories of length 20 for Cliff Walk. Percentages chosen to show change in shape of curve.}
\label{fig:loss_opt}
\end{figure}

\begin{figure}[!htp]
\centering
\includegraphics[width=\textwidth]{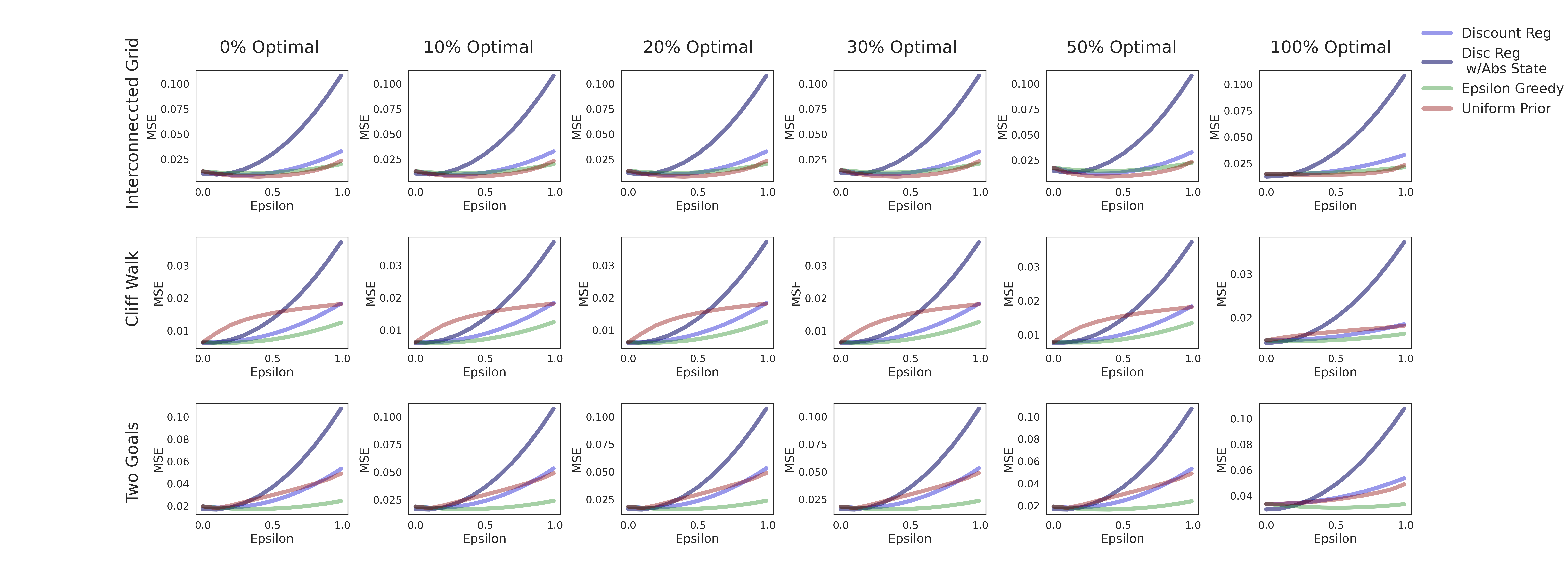}
\caption{\textbf{MSE, varying probability that actions in the data set are drawn from optimal policy.} Random start states; 15 trajectories of length 10 each for Interconnected Grid and Two Goals; 25 trajectories of length 20 for Cliff Walk. Percentages chosen to show change in shape of curve.}
\label{fig:mse_opt}
\end{figure}

\begin{figure}[!htp]
\centering
\includegraphics[width=\textwidth]{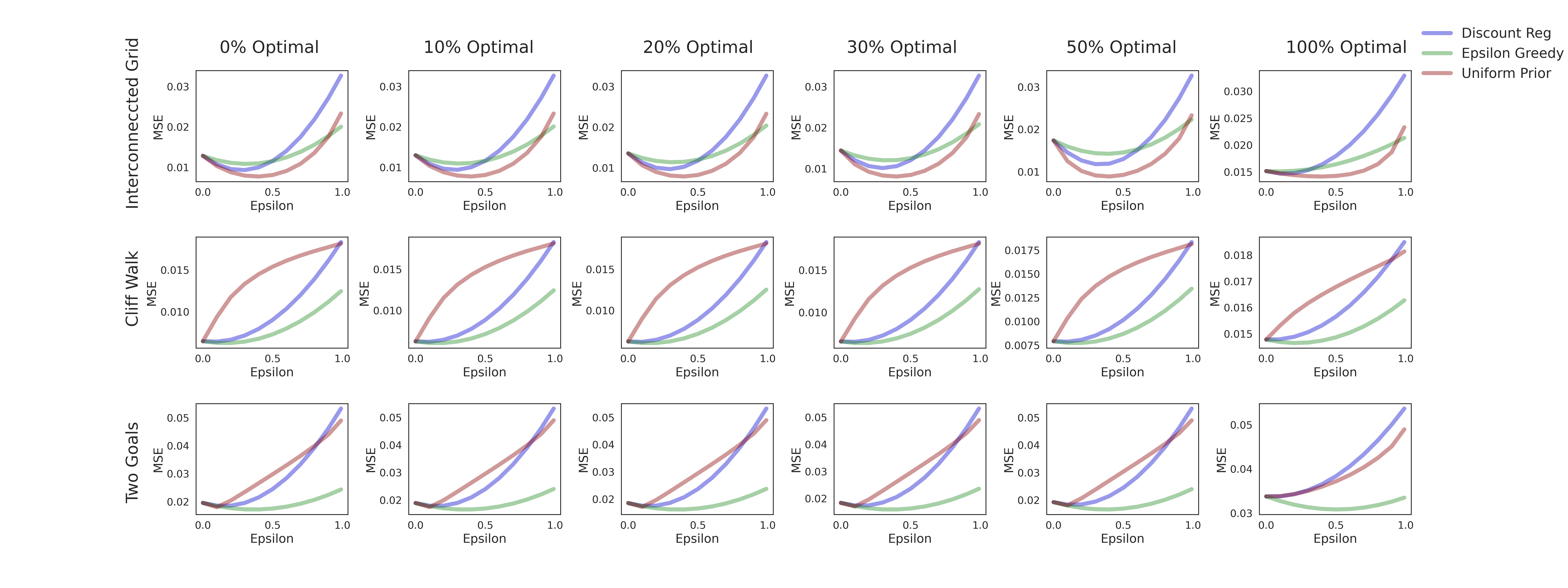}
\caption{\textbf{MSE, varying probability that actions in the data set are drawn from optimal policy.} Random start states; 15 trajectories of length 10 each for Interconnected Grid and Two Goals; 25 trajectories of length 20 for Cliff Walk. Percentages chosen to show change in shape of curve. Discount regularization with absorbing state removed for scale, to show detail on other curves.}
\label{fig:mse_opt2}
\end{figure}

\subsection{Results by Data Set Size}\label{results_data_size}
\subsubsection{Interconnected Grid}
\begin{figure}[H]
\centering
\includegraphics[width=.7\textwidth]{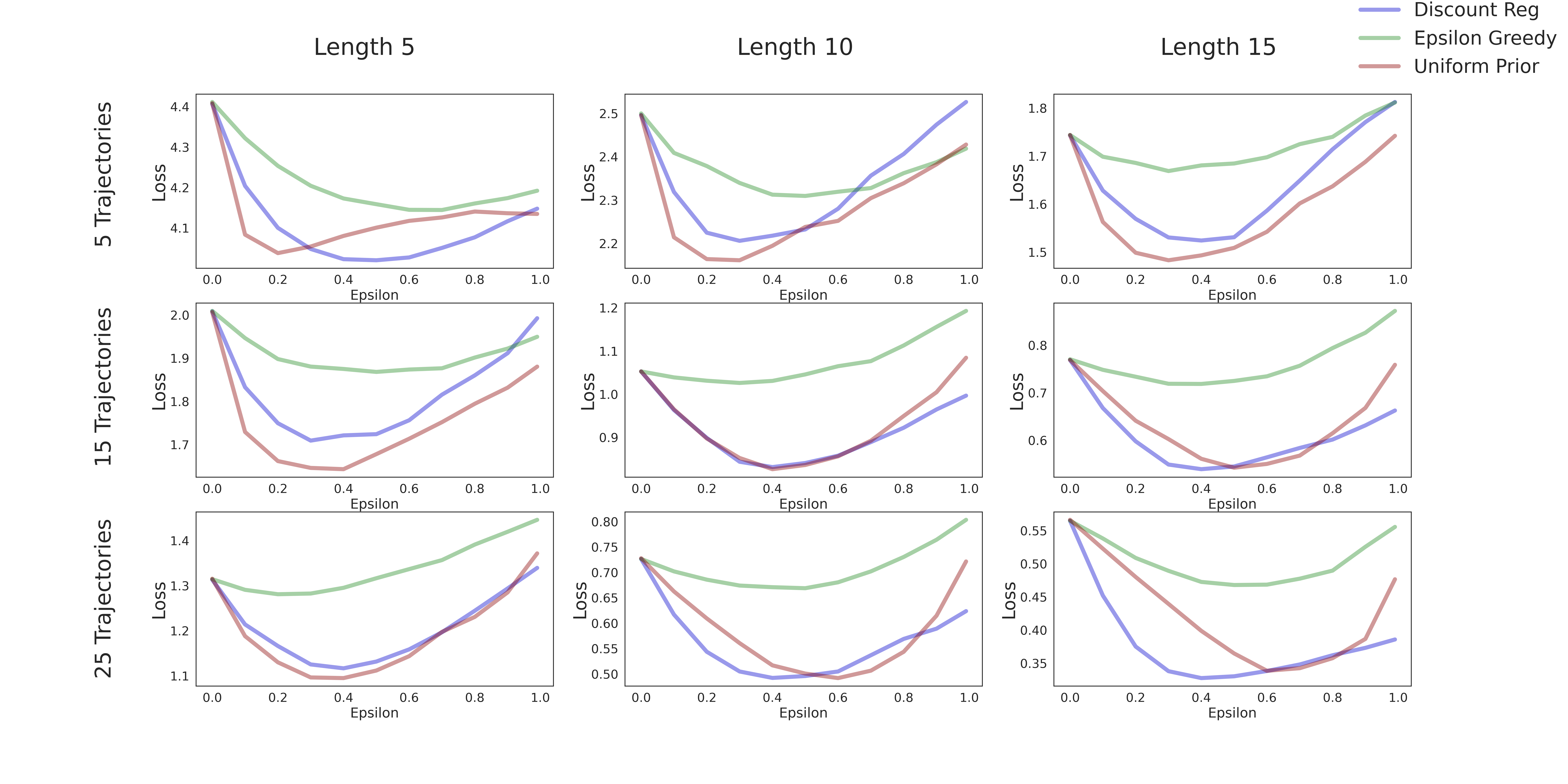}
\vspace{-.5cm}
\caption{\textbf{Interconnected Grid Loss} varying number and length of trajectories in data set. Random start states, random policy.}
\vspace{.4cm}
\includegraphics[width=.7\textwidth]{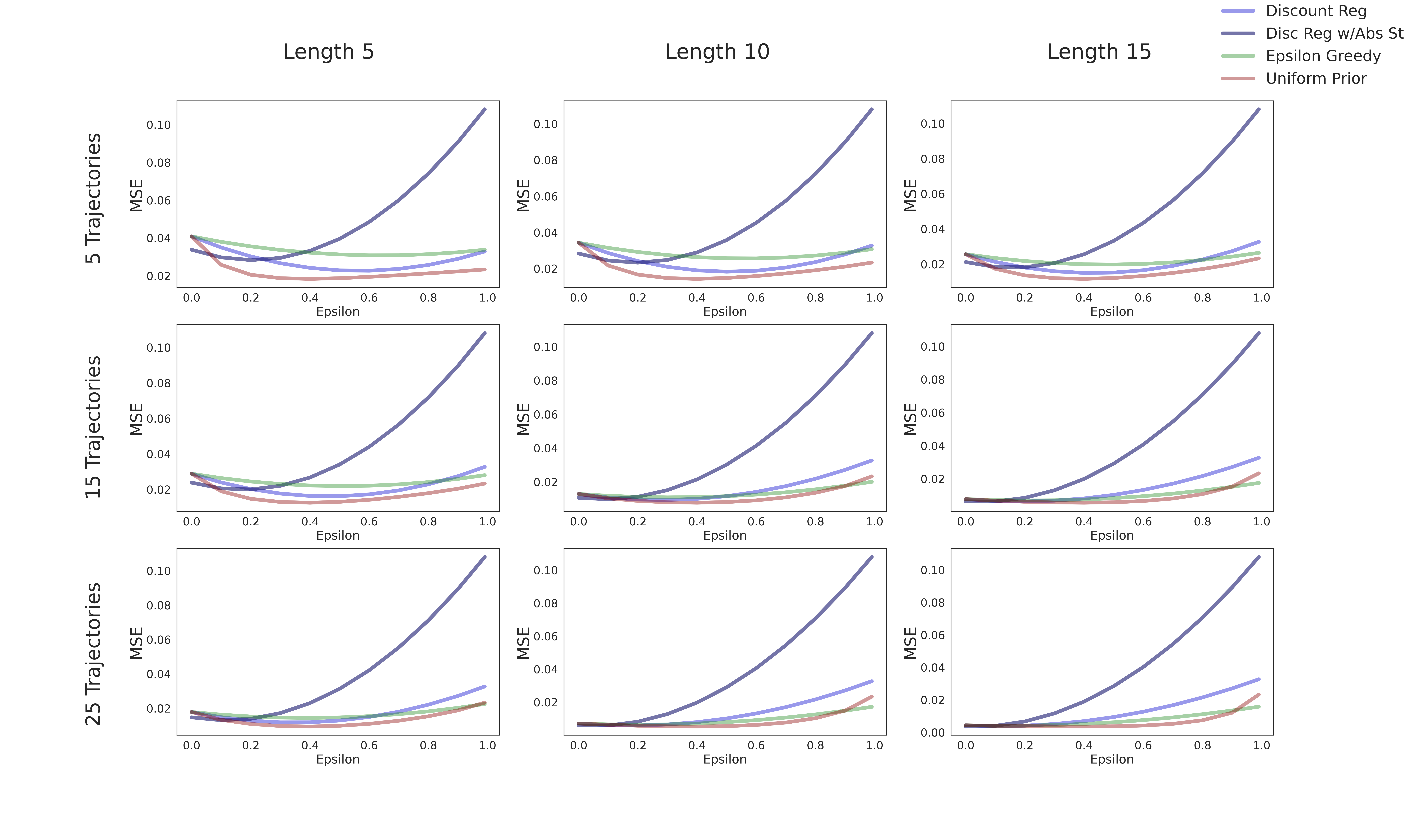}
\vspace{-.5cm}
\caption{\textbf{Interconnected Grid MSE} varying number and length of trajectories in data set. Random start states, random policy.}
\vspace{.4cm}
\includegraphics[width=.7\textwidth]{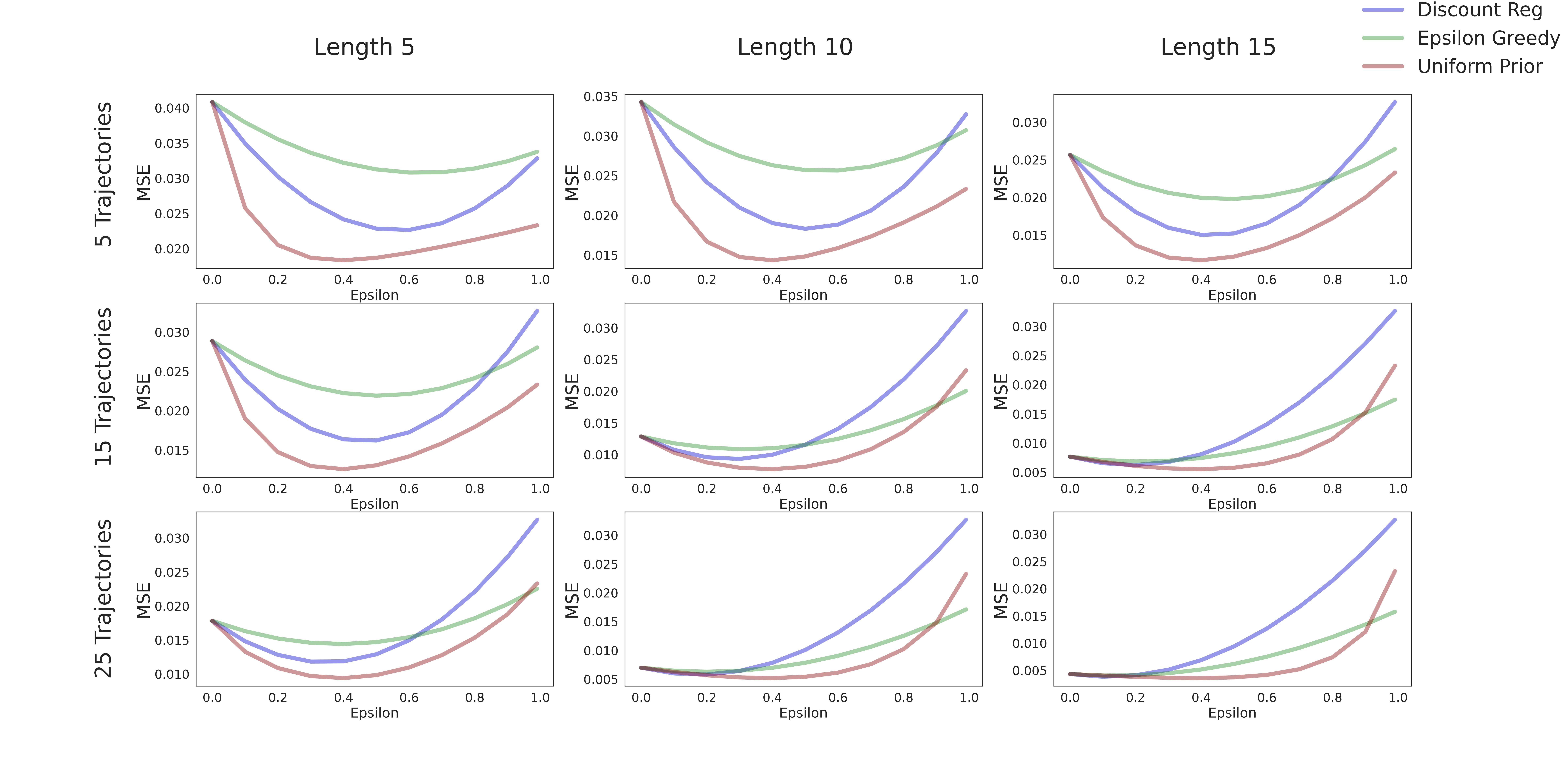}
\vspace{-.5cm}
\caption{\textbf{Interconnected Grid MSE} varying number and length of trajectories in data set. Random start states, random policy. Discount regularization with absorbing state removed for scale, to show detail on other curves.}
\label{fig:data_amt_IG}
\end{figure}

\subsubsection{Cliff Walk}
\begin{figure}[H]
\centering
\includegraphics[width=.7\textwidth]{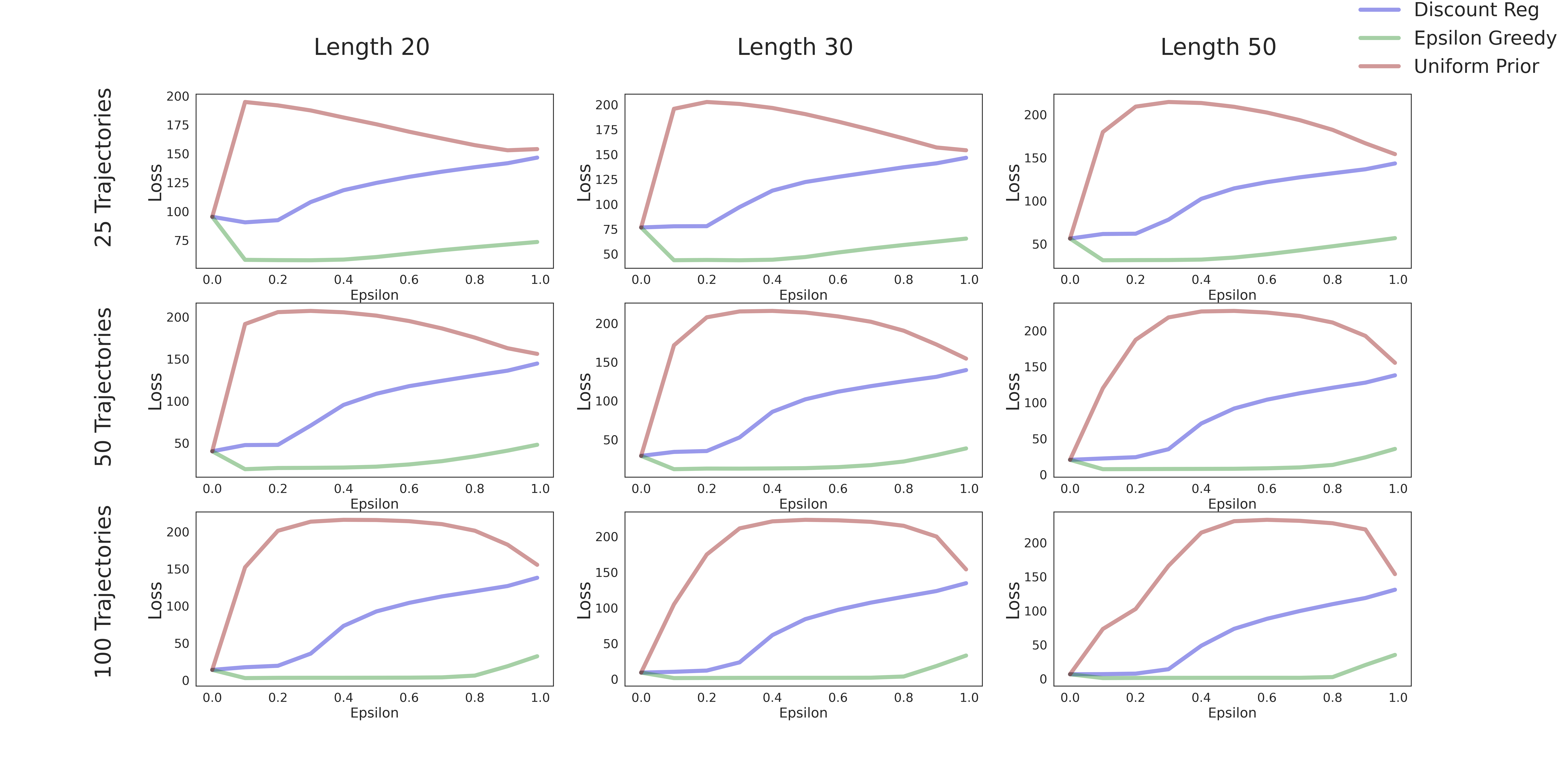}
\vspace{-.5cm}
\caption{\textbf{Cliff Walk Loss} varying number and length of trajectories in data set. Random start states, random policy.}
\vspace{.5cm}
\includegraphics[width=.7\textwidth]{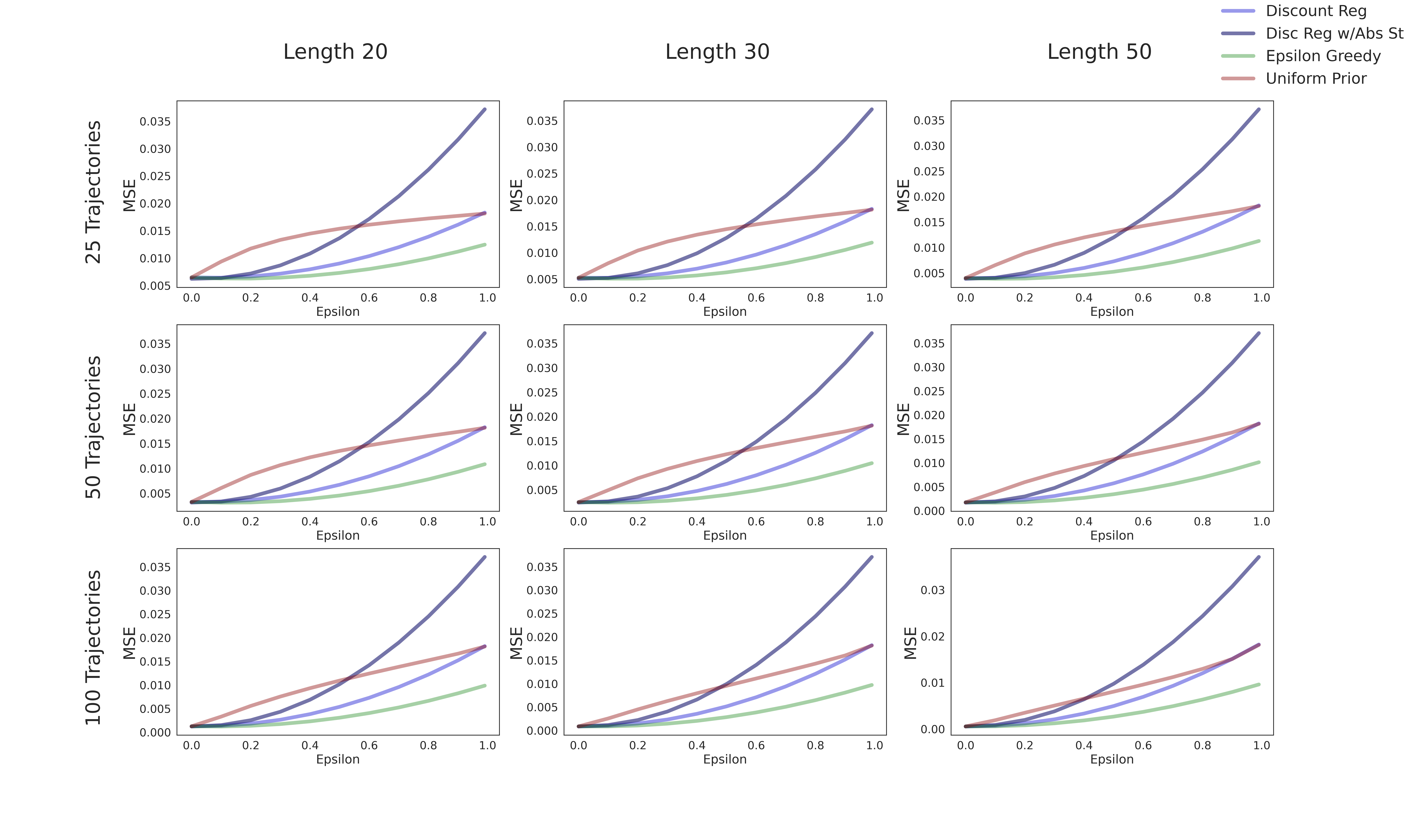}
\vspace{-.5cm}
\caption{\textbf{Cliff Walk MSE} varying number and length of trajectories in data set. Random start states, random policy.}
\vspace{.5cm}
\includegraphics[width=.7\textwidth]{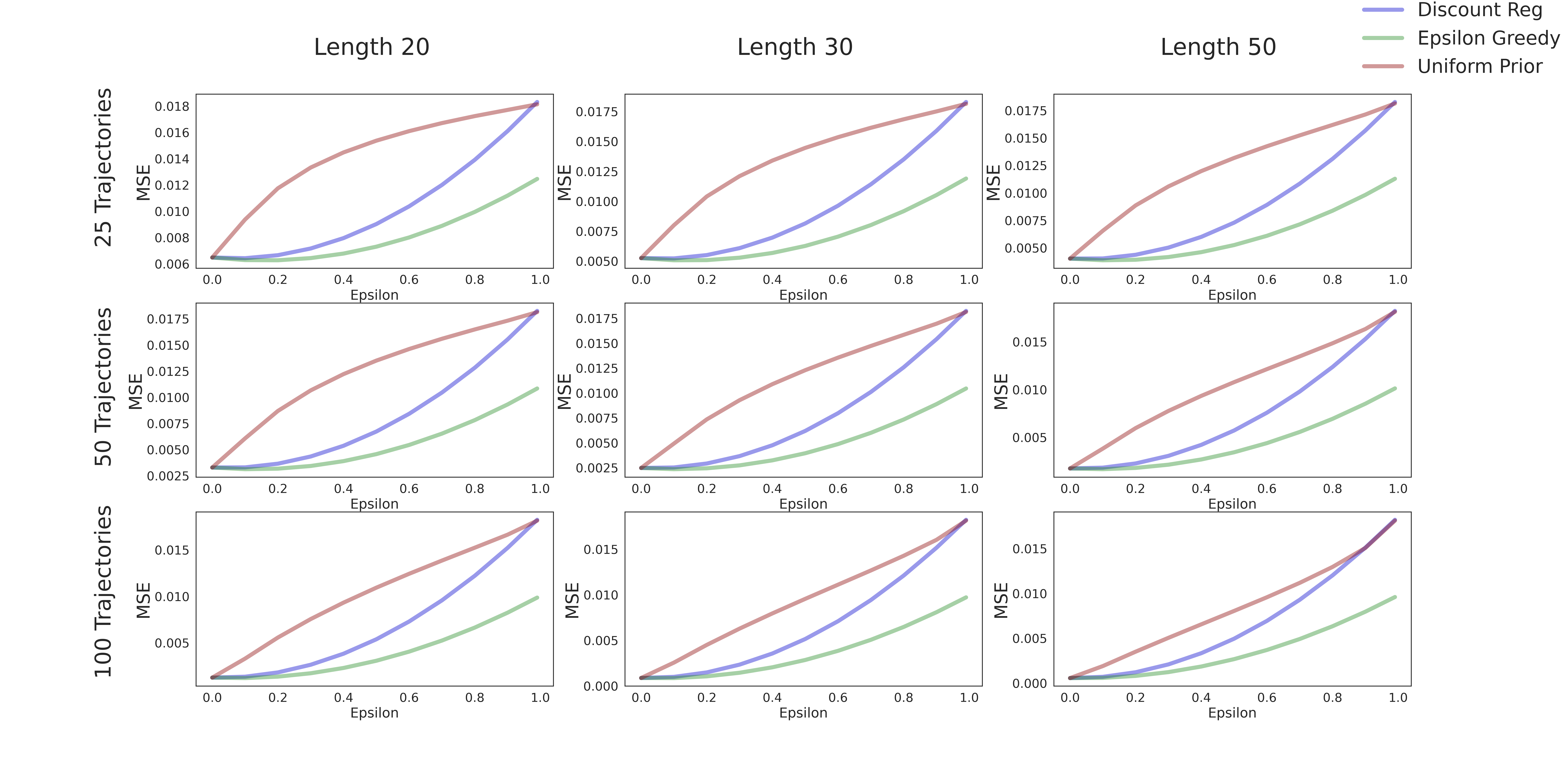}
\vspace{-.5cm}
\caption{\textbf{Cliff Walk MSE} varying number and length of trajectories in data set. Random start states, random policy. Discount regularization with absorbing state removed for scale, to show detail on other curves.}
\label{fig:data_amt_CW}
\end{figure}

\subsubsection{Two Goals}
\begin{figure}[H]
\centering
\includegraphics[width=.7\textwidth]{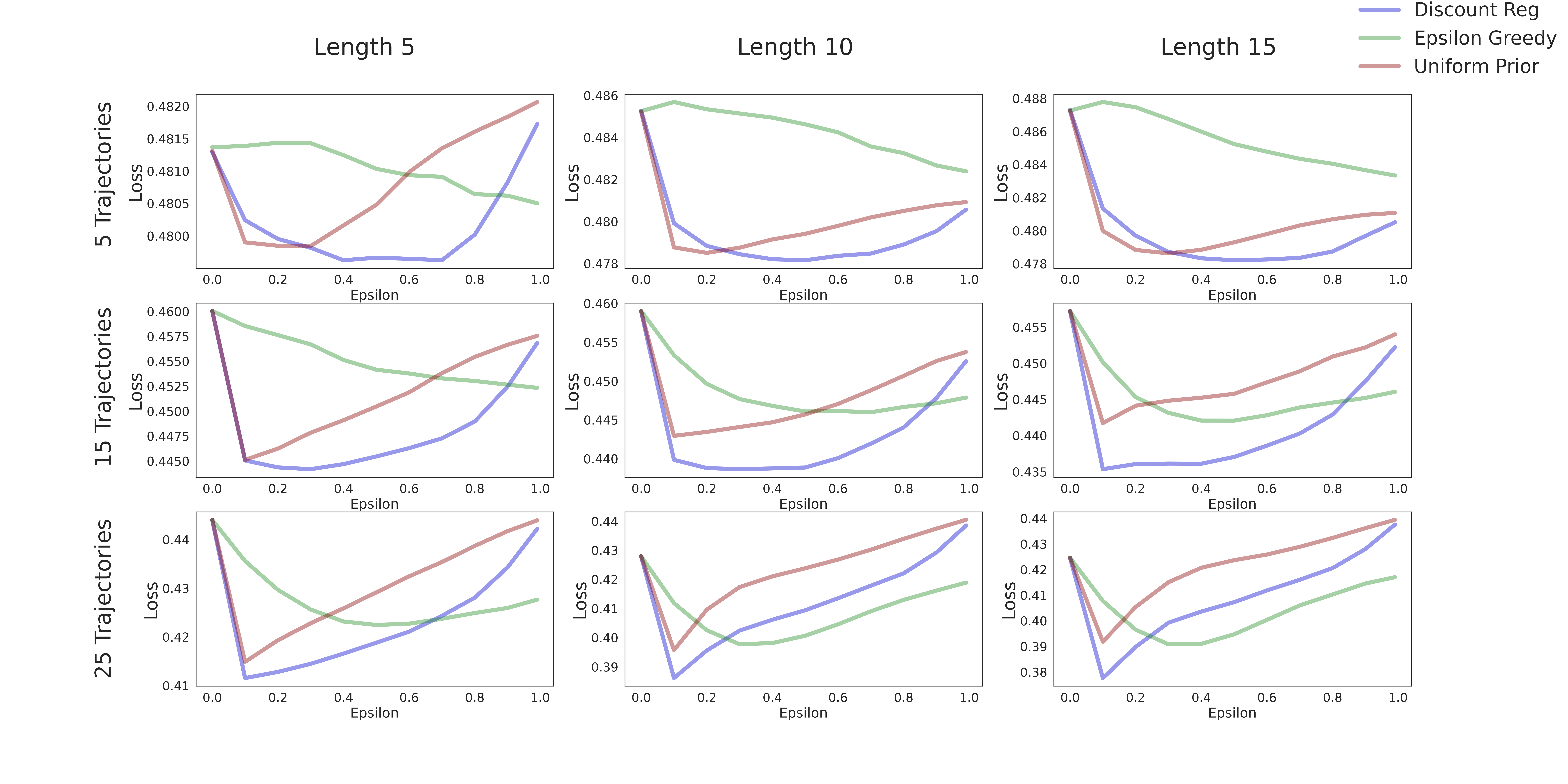}
\vspace{-.5cm}
\caption{\textbf{Two Goals Loss} varying number and length of trajectories in data set. Random start states, random policy.}
\vspace{.5cm}
\includegraphics[width=.7\textwidth]{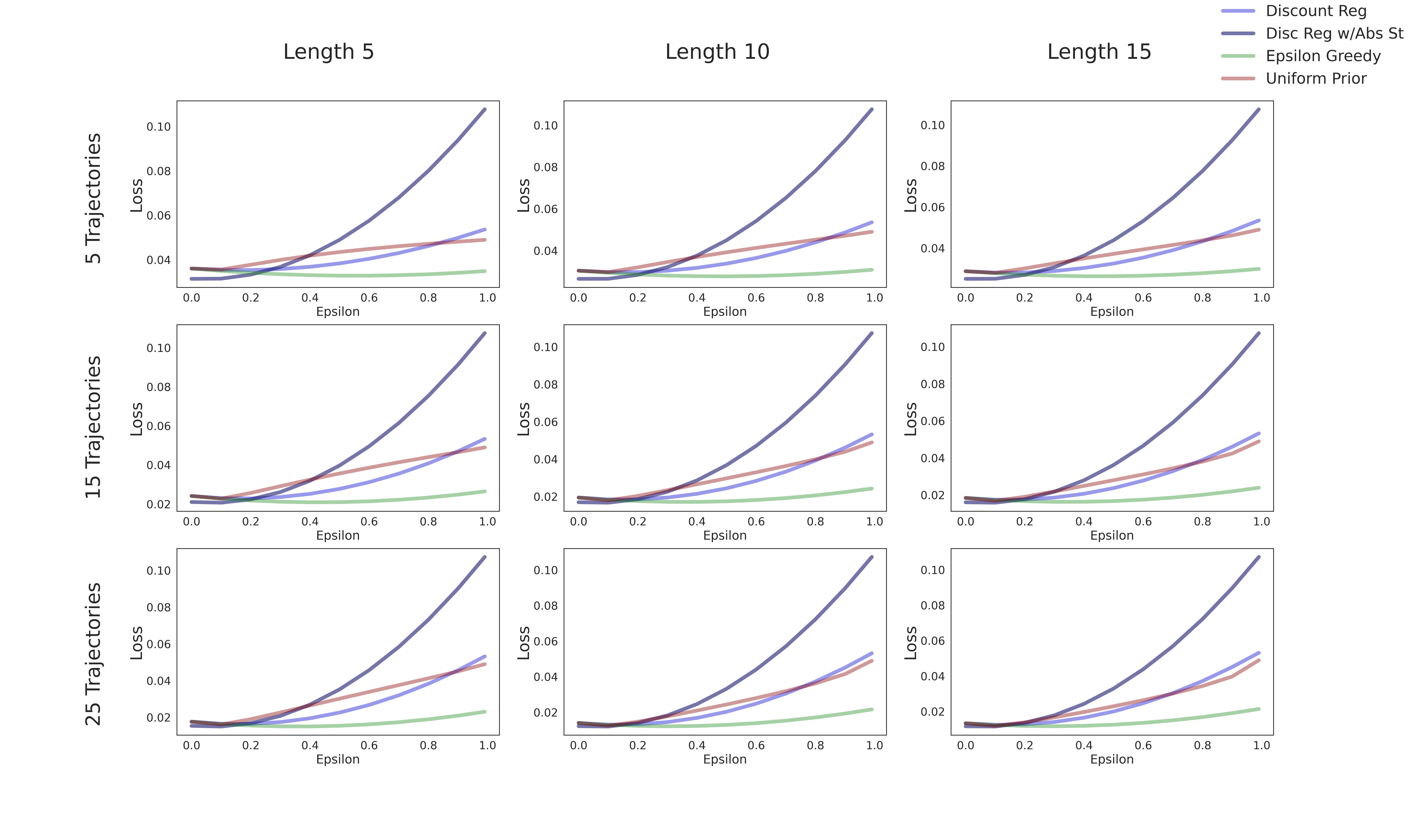}
\vspace{-.5cm}
\caption{\textbf{Two Goals MSE} varying number and length of trajectories in data set. Random start states, random policy.}
\vspace{.5cm}
\includegraphics[width=.7\textwidth]{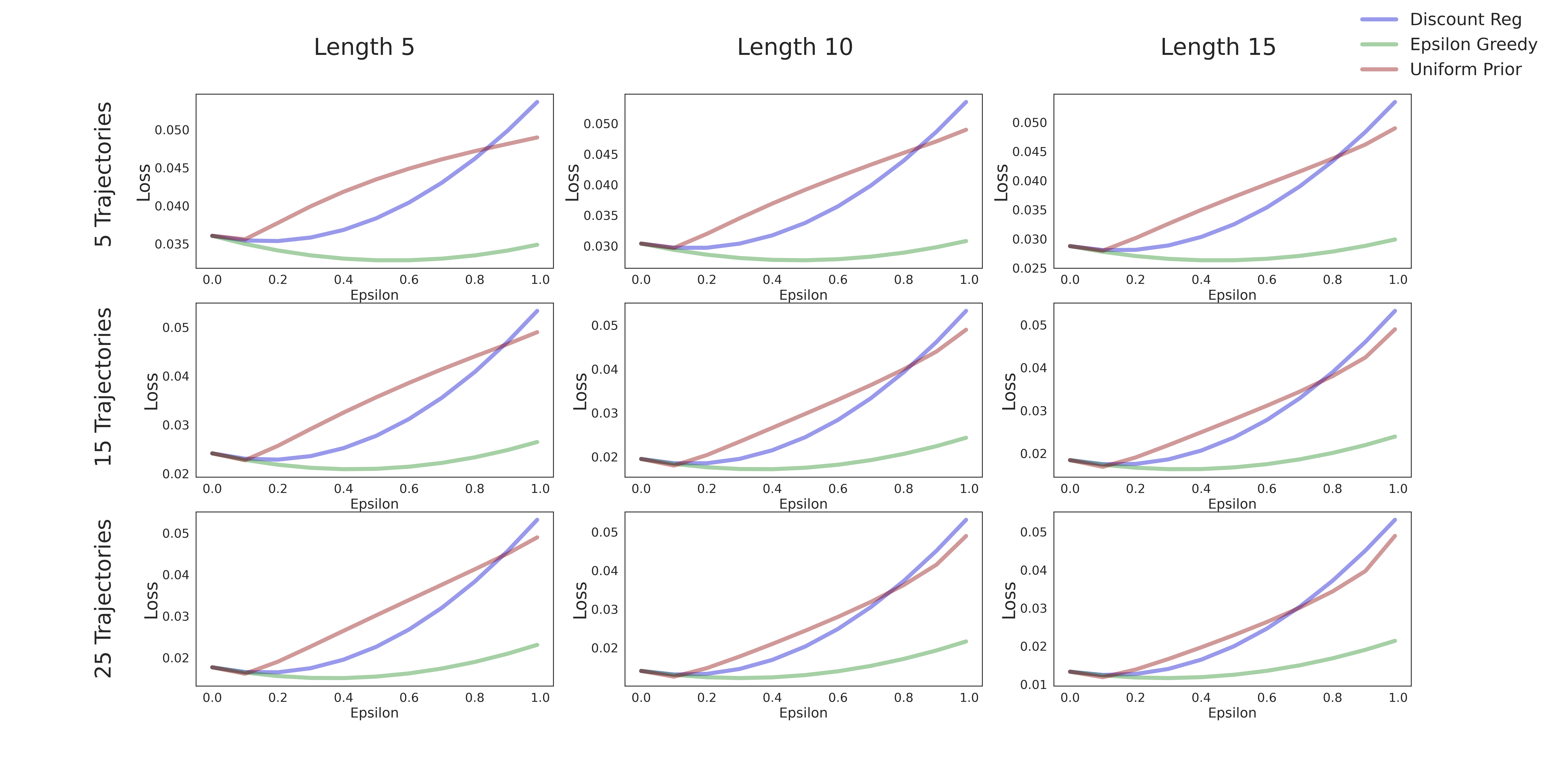}
\vspace{-.5cm}
\caption{\textbf{Two Goals and MSE} varying number and length of trajectories in data set. Random start states, random policy. Discount regularization with absorbing state removed for scale, to show detail on other curves.}
\label{fig:data_amt_TG}
\end{figure}

\subsection{Results by Trajectory Start State}\label{results_start}

\subsubsection{Interconnected Grid}
\begin{figure}[H]
\includegraphics[width=\textwidth]{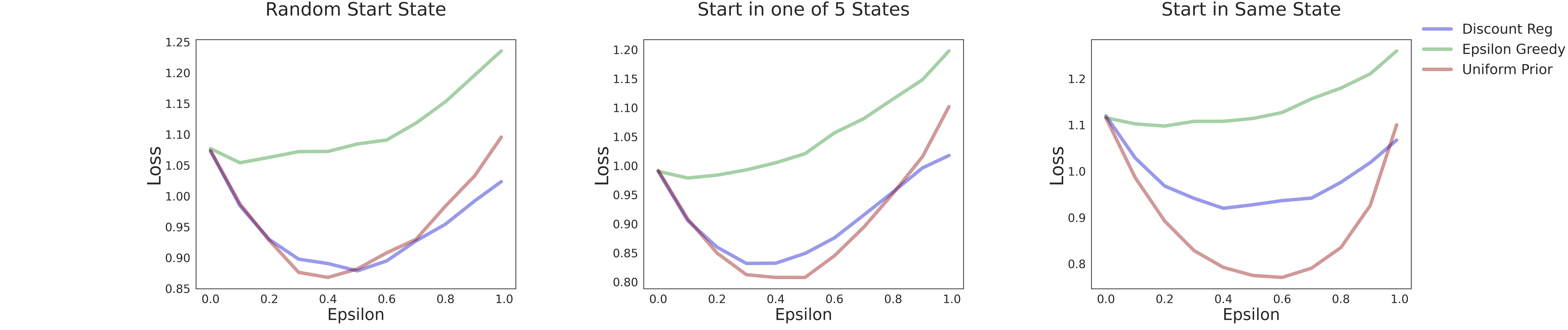}
\caption{\textbf{Interconnected Grid Loss} varying start state. Random policy, 15 trajectories of length 10.}
\end{figure}

\begin{figure}[H]
\includegraphics[width=\textwidth]{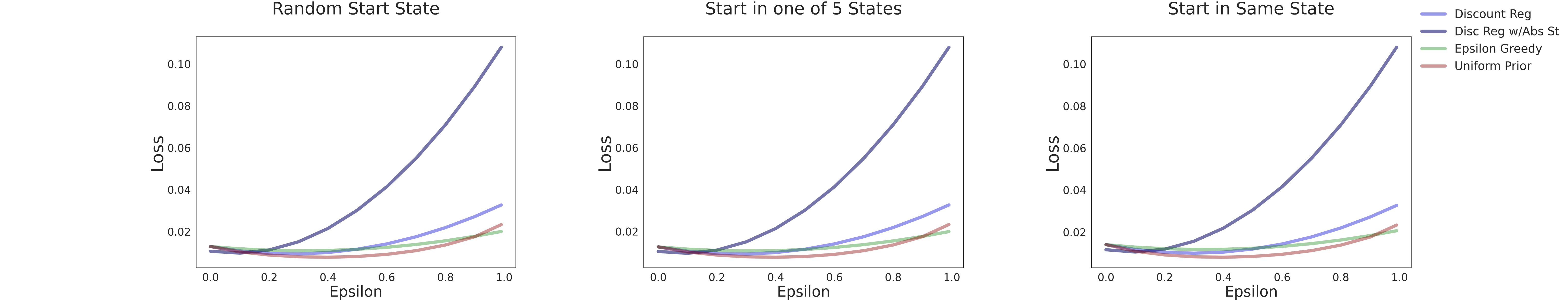}
\caption{\textbf{Interconnected Grid MSE} varying start state. Random policy, 15 trajectories of length 10.}
\end{figure}

\begin{figure}[H]
\includegraphics[width=\textwidth]{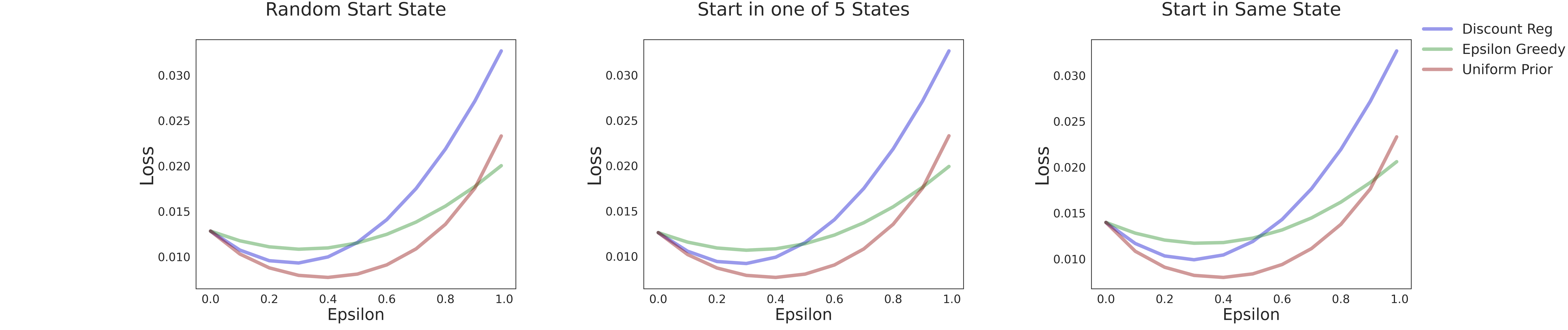}
\caption{\textbf{Interconnected Grid MSE} varying start state. Random policy, 15 trajectories of length 10. Discount regularization with absorbing state removed for scale, to show detail on other curves.}
\end{figure}

\newpage
\subsubsection{Cliff Walk}
\begin{figure}[H]
\includegraphics[width=\textwidth]{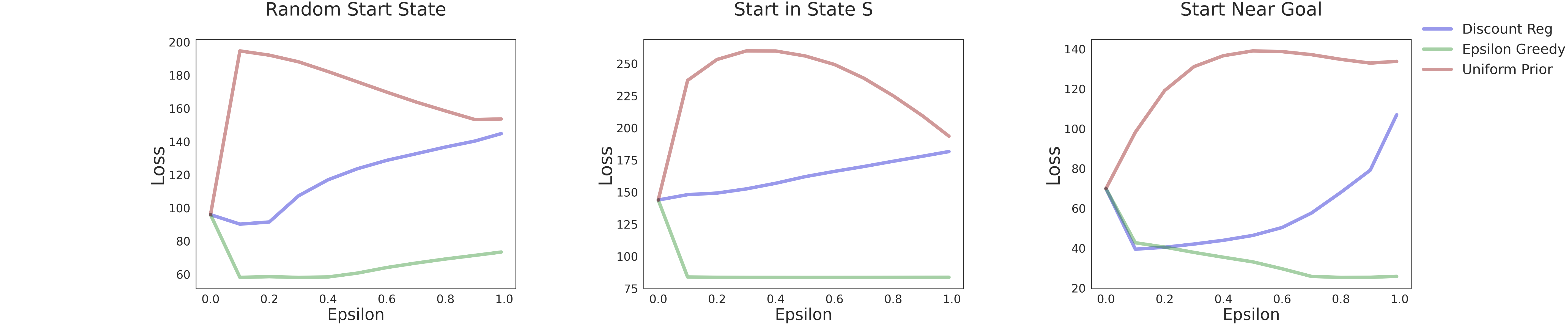}
\caption{\textbf{Cliff Walk Loss} varying start state. Random policy, 25 trajectories of length 20.}
\end{figure}

\begin{figure}[H]
\includegraphics[width=\textwidth]{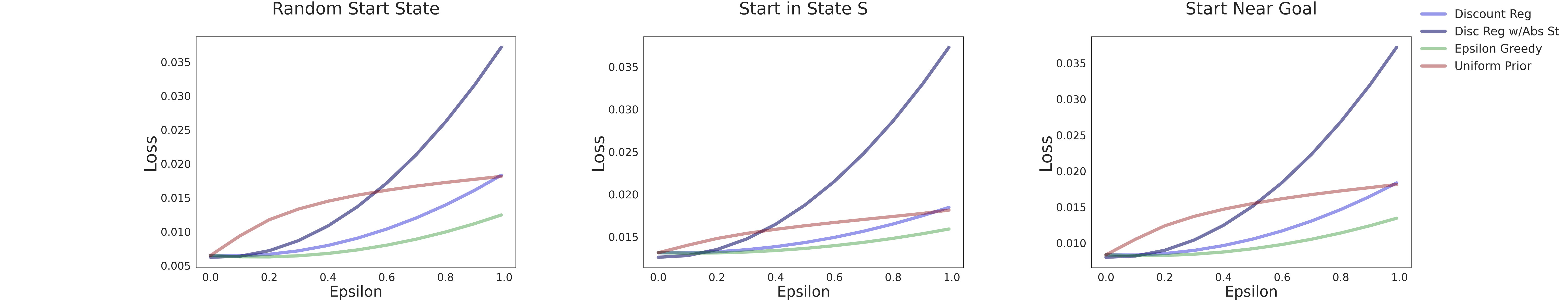}
\caption{\textbf{Cliff Walk MSE} varying start state. Random policy, 25 trajectories of length 20.}
\end{figure}

\begin{figure}[H]
\includegraphics[width=\textwidth]{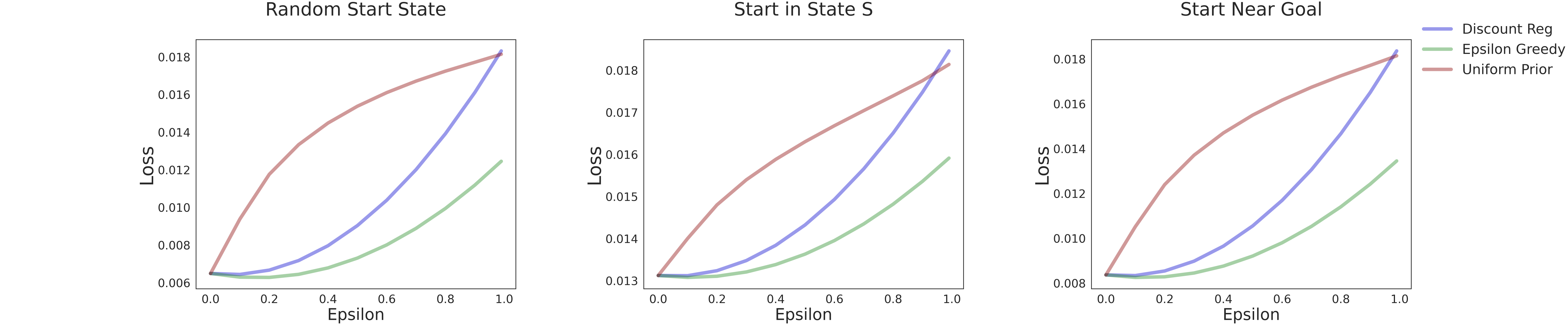}
\caption{\textbf{Cliff Walk MSE} varying start state. Random policy, 25 trajectories of length 20. Discount regularization with absorbing state removed for scale, to show detail on other curves.}
\end{figure}

\subsubsection{Two Goals}
\begin{figure}[H]
\includegraphics[width=\textwidth]{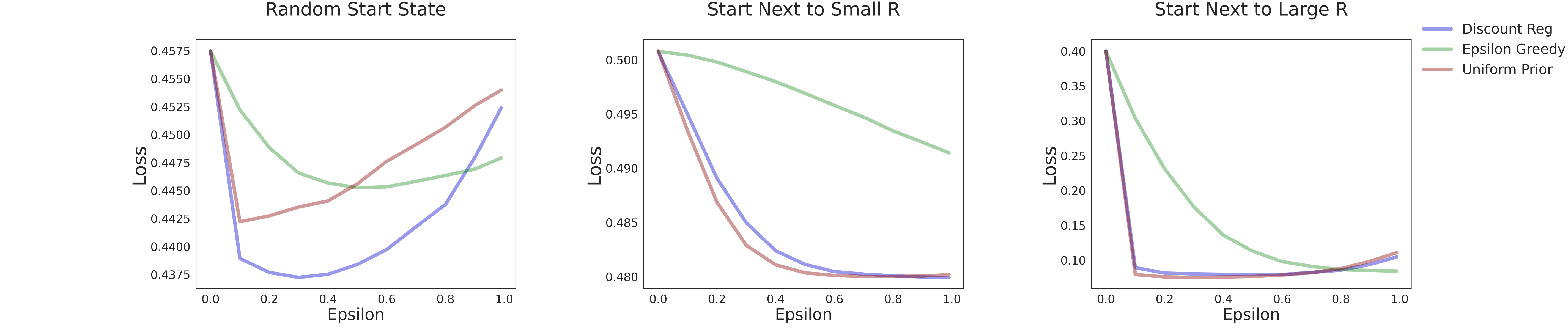}
\caption{\textbf{Two Goals Loss} varying start state. Random policy, 15 trajectories of length 10.}
\end{figure}

\begin{figure}[H]
\includegraphics[width=\textwidth]{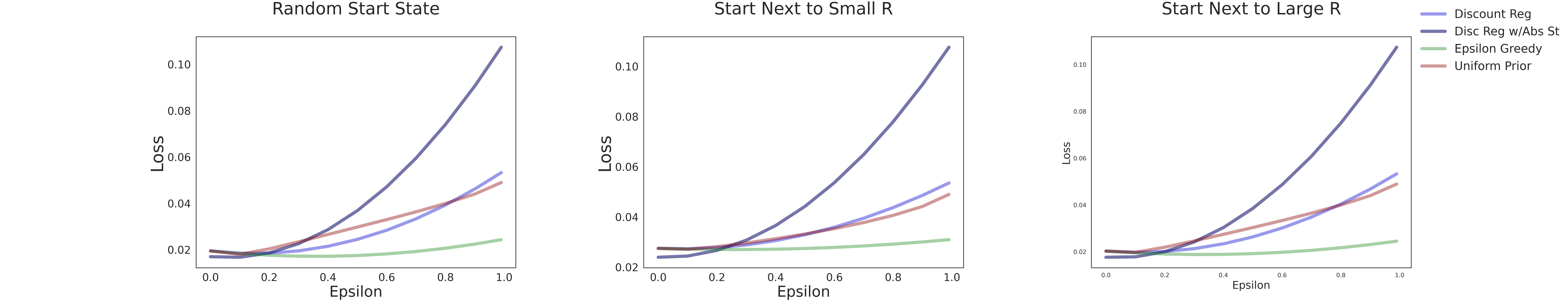}
\caption{\textbf{Two Goals MSE} varying start state. Random policy, 15 trajectories of length 10.}
\end{figure}

\begin{figure}[H]
\includegraphics[width=\textwidth]{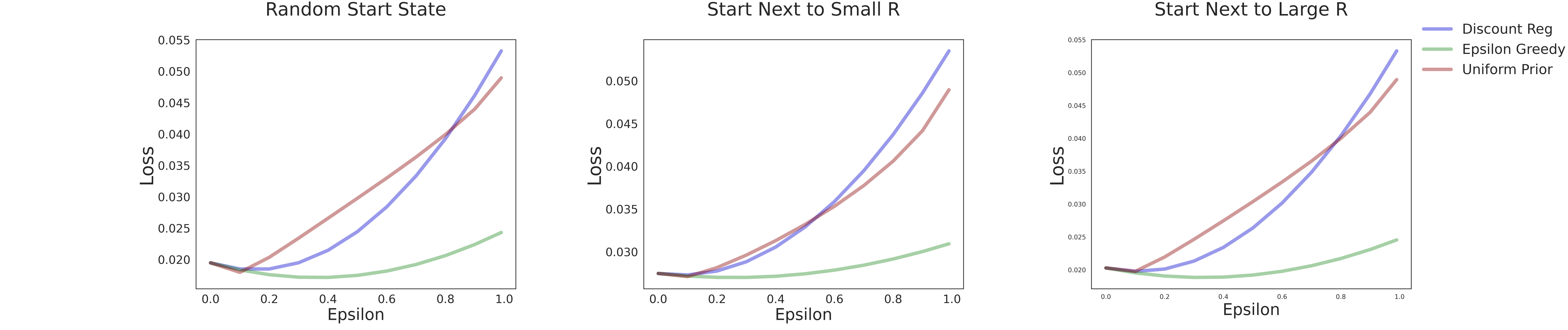}
\caption{\textbf{Two Goals MSE} varying start state. Random policy, 15 trajectories of length 10. Discount regularization with absorbing state removed for scale, to show detail on other curves.}
\end{figure}

\end{document}